\newcommand{\bE}{\mathbb{E}}
\newcommand{\bP}{\mathbb{P}}
\newcommand{\cG}{\mathcal{G}}
\newcommand{\te}{\theta}
\newcommand{\cK}{\mathcal{K}}
\newcommand{\cC}{\mathcal{C}}
\newcommand{\cE}{\mathcal{E}}
\newcommand{\ust}{^{\star}}
\newtheorem{assumption}{\textbf{Assumption}}
\title[]{Reward Biased Maximum Likelihood Estimation for Reinforcement Learning}
\author{%
	\Name{Akshay Mete}\thanks{denotes equal contribution} \Email{akshaymete@tamu.edu}\\
	\addr Texas A\&M University, TX, USA.
	\AND
	\Name{Rahul Singh}\footnotemark[1]\Email{rahulsingh@iisc.ac.in}\\
	\addr Indian Institute of Science, Bangalore, India. 
	\AND
	\Name{Xi Liu} \Email{xiliu.tamu@gmail.com}\\
	\addr Texas A\&M University, TX, USA.
	\AND
	\Name{P. R. Kumar} \Email{prk@tamu.edu}\\
	\addr Texas A\&M University, TX, USA.
}
\begin{document}
	
	\maketitle
	
\begin{abstract}
The Reward-Biased Maximum Likelihood Estimate (RBMLE) for adaptive control
of Markov chains was
proposed in~\citep{kumar_becker_82} to overcome the central
obstacle of what is variously called the fundamental
``closed-identifiability problem" of adaptive control \citep{borkar_varaiya_79},
the ``dual control problem" by Feldbaum \citep{feldbaum1960dual1,feldbaum1960dual2},
or, contemporaneously, the ``exploration vs.~exploitation problem".
%This problem was originally called the ``dual control problem" by Feldbaum \citep{feldbaum1960dual1,feldbaum1960dual2}
%since control has two purposes, to probe or to steer. Nowadays it is referred to as 
%which is well captured by the multi-armebd bandit probmel.
%\.
It exploited the key observation that since the
maximum likelihood parameter estimator can asymptotically identify the closed-transition
probabilities under a certainty equivalent approach \citep{borkar_varaiya_79}, the limiting parameter
estimates must necessarily have an optimal reward that is less than the
optimal reward attainable for the true but unknown system. Hence it proposed a counteracting reverse bias in
favor of parameters with larger optimal rewards, providing a carefully structured solution to
the fundamental problem alluded to above.
It thereby proposed an optimistic approach of favoring parameters with larger optimal rewards,
now known as ``optimism in the face of uncertainty."
The RBMLE approach has been proved to be long-term average reward optimal in a variety of contexts including
controlled Markov chains, linear quadratic Gaussian (LQG) systems, some nonlinear systems, and diffusions.
However, modern attention is focused on the much finer notion of ``regret," or finite-time performance for \emph{all} time,
espoused by \citep{lai_85}.
Recent analysis of RBMLE for multi-armed stochastic bandits \citep{liu_20} and linear contextual bandits~\citep{hung_20}
has shown that it not only has state-of-the-art regret, but it also exhibits empirical performance comparable to or better than the best
current contenders, and leads to several new and strikingly simple index policies for these classical problems.
Motivated by this, we examine the finite-time performance of RBMLE for
reinforcement learning tasks that involve the general problem of optimal control of unknown Markov Decision Processes. 
We show that it has a regret of $O(\log T)$ over a time horizon of $T$ steps, similar to state-of-the-art algorithms. Simulation studies show that RBMLE outperforms other algorithms such as UCRL2~\citep{auer_09} and Thompson Sampling~\citep{ouyang_17,gopalan_15,abbasi_15}.
\end{abstract}
	
	\begin{keywords}%
		Reinforcement Learning; Markov Decision Process; Adaptive Control%
	\end{keywords}
	
\section{Introduction} \label{Introduction}
%We consider the problem of adaptive control of Markov chains.

Consider a controlled Markov chain with finite state space $X$, finite action set $U$, and controlled transition
probabilities $\bP(x(t+1)=y~|x(t)=x, u(t)=u) = p(x,y,u)$, where $x(t) \in X$ denotes the state at time $t$, and $u(t) \in U$ denotes the action
taken at time $t$. A reward $r(x,u)$ is received when action $u$ is taken in state $x$.
Let $J\ust(p)$ denote the maximal long-term average reward $\liminf_{T \to \infty} \frac{1}{T} \sum_{t=1}^{T} r(x(t),u(t))$
obtainable.
 We consider the case where the transition probabilities $p$
are only known to belong a set $\Theta$, but otherwise unknown.
 We address the adaptive control problem of minimizing the expected ``regret" 
 \begin{align}
 TJ\ust(p) - \bE \sum_{t=1}^{T} r(x(t),u(t) \label{regret definition}
 \end{align}
as a function of $T$.

This broad problem has a long history.
Let $J(\theta,\pi)$ denote the long-term average reward reward accrued by a stationary deterministic policy $\pi : X \to U$
when the transition probabilities are given by $\theta = \{ \theta(x,y,u), x \in X, y \in X, u \in U \}$, let
$J\ust( \theta ) := \max_\pi J(\theta,\pi)$ denote
the optimal long-term average reward attainable under $\theta$, and
let $\pi^\theta \in \arg\max  J(\theta,\pi)$ be an optimal policy for $\theta$. 
In early work, \citep{mandl_74} studied the problem of using a ``certainty equivalent" approach, where a maximum likelihood estimate
(MLE) 
\begin{align}
\hat{\theta}(t) \in \arg\max_{\theta} \prod_{s=1}^{t-1} \theta(x(s), x(s+1),u(s))
\end{align}
 of the unknown transition probabilities is made at each time $t$, and
an action 
$
u(t) = \pi^{\hat{\theta}(t)}(x(t))
$
is taken that is optimal in state $x(t)$ for the transition probabilities $\hat{\theta}(t)$.
Mandl showed that if an ``identifiability condition",
\begin{align}
\theta \neq \theta' \text{ with } \theta, \theta' \in \Theta \implies \theta(x, \cdot , u) \neq \theta' (x, \cdot, u) \quad \forall (x,u) \in X \times U
\end{align}
holds,
then the maximum likelihood estimates $\hat{\theta}(t)$ converge to the true transition probabilities $p$
as $t \to \infty$, and the corresponding long-term average reward obtained by the adaptive controller is 
the optimal reward $J\ust(p)$.
This identifiability condition is however restrictive, e.g., it is not satisfied for the two-armed bandit problem
or any problem with a fundamental exploration vs.~exploitation dilemma.

In general, in the absence of the identifiability condition, \citep{borkar_varaiya_79}
showed that one
only obtains ``closed-loop identification": the maximum likelihood estimates converge to a  $\theta^*$ for which
\begin{align}
\theta^*(x,y, \pi^{\theta^*}(x)) = p(x,y, \pi^{\theta^*}(x)) \quad \quad \forall (x,y). \label{closed-loop identifiability}
\end{align}
However, the limiting policy $\pi^{\theta^*}$ is generally not an optimal long-term average policy for the true transition probabilities $p$.
Indeed this is the central challenge of the exploration vs.~exploitation problem: As the parameter estimates begin to
converge exploration ceases, and one ends up only identifying the behavior of the system under the limited actions being applied to the
system. One misses out on other potentially valuable policies.

This central difficulty was overcome in \citep{kumar_becker_82}. They first noted that (\ref{closed-loop identifiability}) implies that $J(\theta^*,\pi^{\theta^*}) = 
J(p,\pi^{\theta^*})$. As a consequence of this, since $\pi^{\theta^*}$ is optimal for $\theta^*$, i.e., $J(\theta^*,\pi^{\theta^*}) =  J\ust(\theta^*)$,
but not for $p$, i.e., $J(p,\pi^{\theta^*}) \leq J\ust(p)$, they made the critical observation that the
optimal long-term average reward accruable for the limiting estimate must necessarily be lower than the
optimal long-term average reward accruable for the true parameter:
\begin{align}
J\ust(\theta^*) \leq J\ust(p). \label{bias}
\end{align}
Therefore, the maximum likelihood estimator is inherently \emph{biased} in favor of $\theta$'s with lower optimal rewards than $p$.
Therefore to extricate oneself from this bind, one must necessarily tilt the balance toward exploring
parameters with larger optimal rewards.
Motivated by this, they proposed a certainty equivalent approach using a \emph{Reward Biased MLE (RBMLE)}
that attempts to counteract this with a bias in the reverse direction, favoring parameters $\theta$ with a larger optimal reward:
\begin{align}
\hat{\theta}(t) \in \arg\max_{\theta} f(J\ust(\theta))^{\alpha(t)} \prod_{s=1}^{t-1} \theta(x(s), x(s+1),u(s)), \label{RBMLE}
\end{align}
where $f$ is any strictly monotone increasing function.
This biasing however has to be delicate in that $\alpha(t)$ has to be large enough so that
it asymptotically does choose parameters with larger optimal reward than under $p$,
but has to be small enough in that it does not lose the consistency property (\ref{closed-loop identifiability}) of the MLE.
They showed that the choice  $\alpha (t) = o(t)$ with $\lim_{t \to + \infty} \alpha(t) = + \infty$ suffices in ensuring
(\ref{closed-loop identifiability}) for every Cesaro-limit point $\theta^*$ of the RBMLE estimator (\ref{RBMLE}),
but also satisfies
\begin{align}
J\ust(\theta^*) \geq J\ust(p). \label{unbias}
\end{align}
From this it follows that
\begin{align}
J\ust(p) \leq J\ust(\theta^*) = J(\theta^*,\pi^{\theta^*}) =  J(p,\pi^{\theta^*}) \leq J\ust(p), \label{opt}
\end{align}
resulting in $\pi^{\theta^*}$ being an optimal long-term average reward policy for $p$.

%The above approach therefore proposed the optimistic philosophy of favoring parameters with larger rewards,
%which is now called ``optimism in the face of uncertainty."
%that in the absence of at is, one needs to necessarily pursue an 
%
%The inequality (\ref{bias}) shows that if any model is consistent with the data
%obtained when one is operating under an optimal policy for that model is necessarily
%one with a lower optimal reward than the true one. Hence one needs to necessarily explore models
%under which the optimal reward is greater than the model.

The RBMLE policy therefore proposed the optimistic philosophy of favoring parameters
with larger rewards, now known as ``optimism in the face of uncertainty'' (OFU).
The inequality (\ref{bias}) indicates why this is fundamentally necessary, since otherwise there is a one-sided exploration bias.
RBMLE was the first long-term
average reward optimal (also called ``asymptotically optimal") learning algorithm in the frequentist setting~\citep{berry1985bandit}
that does not resort to forced explorations.
In the special case of Bernoulli bandits it was shown to yield particularly simple index policies \citep{becker_81}. 
The RBMLE approach has since
been applied to a wide range of sequential decision-making, learning, and adaptive control problems.
The approach was extended to more general MDPs in \citep{kumar_82,kumar_lin_82,borkar1990kumar},
to LQG systems in \citep{kumar1983optimal,campi_98,prandini_00},
to linear time-invariant systems in \citep{bittanti2006adaptive},
to adaptive control of nonlinear systems in \citep{kumar1983simultaneous},
to  more general ergodic probems in \citep{stettner1993nearly},
and to controlled diffusions \citep{borkar1991self,duncan1994almost},
where its long-term average optimality was established.

A finer notion of optimality than long-term average reward optimality is ``regret" (\ref{regret definition}), which was proposed in
\citep{lai_85} in the context of multi-armed bandits (MABs).
Long-term average optimality corresponds to a regret of $o(T)$, but \citep{lai_85} asked the much more delicate question
of how small exactly can regret be made.
They were able to sharply characterize the optimal regret as $c \log T +o(\log T)$ for MABs.
The performance criterion of regret has now become central to the broader field of Reinforcement learning (RL)~\citep{sutton1998introduction}, which
 involves an agent repeatedly interacting with an unknown environment that is modeled as a Markov decision process (MDP)~\citep{puterman2014markov}
 to maximize a total reward. 
% The goal of the agent is to maximize the total reward that it earns. Since the agent does not know the dynamics of the environment or the controlled transition probabilities of the MDP, it has to perform a trade-off between exploring those parts of the environment that are relatively unknown, and exploiting those regions where it is ``more confident'' that it can earn rewards at a higher rate \citep{auer_07,auer_09}. 
% The efficiency of an RL algorithm is measured by quantifying its ``regret'', which is the difference between the maximum reward that could be earned by an optimal control policy that knows the controlled transition probabilities, and the rewards earned by the learning rule.\par
%The problem of designing an efficient RL algorithm is well studied by now. 
Many algorithms such as UCRL \citep{auer_07}, UCRL2 \citep{auer_09}, R-Max \citep{brafman_02}, REGAL \citep{bartlett_12}, Posterior Sampling \citep{strens_00}, \citep{osband_13} and TSMDP \citep{gopalan_15} have been studied in great detail, and their learning regret analyzed.

Lai and Robbins also proposed an ``Upper Confidence Bound" (UCB) policy which plays the bandit whose upper confidence bound is highest,
and showed that it
attains the optimal order of regret.
The UCB policy also employs the OFU principle by trying arms with larger potential rewards, but in a different way
from RBMLE.
It has been extended to a wide variety of learning problems:~\citep{brafman_02,auer_02,auer_09,bartlett_12,singh2020learning}.

While the original work analyzed its long-term average optimality,
the finite-time regret analysis of RBMLE based algorithms
%and its comparison with the family of UCB algorithms
in various settings
is an overdue topic of topical interest.
An initial step in this direction was taken in \citep{liu_20} by analyzing RBMLE for the 
special case of stochastic multi-armed bandits. 
The index policy for Bernoulli bandits suggested in \citep{becker_81}
was generalized to the exponential family of bandits.
They analyzed RBMLE's performance for the exponential family of multi-armed bandits (MABs) and showed that the regret scales as $O(\log T)$.
Moreover, numerical experiments in \citep{liu_20} clearly exhibited that RBMLE outperforms the UCB in terms of empirical regret,
and in fact RBMLE is competitive or slightly better than current state-of-art contenders.
Moreover, RBMLE does so with low computational cost in view of its simple indices.
%Moreover it has much lesser computational complexity, and is much more scalable than the current state of the art policies for multi-armed bandits such as UCB~\citep{auer_02}, GPUCB~\citep{srinivas_12}, KLUCB~\citep{filippi_10} and IDS~\citep{russo_18}.
Recently~\citep{hung_20} have proposed an extension of RBMLE for linear contextual bandits which achieves an $O(\sqrt{T}\log T)$ regret, better than the existing state of the art policies like LinTS~\citep{agrawa_13} and GPUCB~\citep{srinivas_09}. They also show that RBMLE has a competitive regret performance in simulations, and is computationally efficient in comparison with current state-of the art policies such as in~\citep{agrawa_13},~\citep{srinivas_09}.

Another recent effort \citep{abbasi2011regret}, motivated by the RBMLE approach of \citep{campi_98,bittanti2006adaptive},
addressed the performance of regret for linear quadratic Gaussian (LQG) systems, and established a regret of $\tilde{O}(\sqrt{T})$\footnote{$\tilde{O}$ hides factors that are logarithmic in $T$.}.

Due to these developments showing optimal regret performance of RBMLE in these two contexts,
it is of interest to examine the regret performance of RBMLE in more general settings.
%for RL, learning in LQG problems and Bayesian Optimization. 
This paper takes the first step in finite-time regret analysis and empirical analysis of the RBMLE algorithm for 
reinforcement learning (RL) tasks that involve the general problem of optimal control of unknown Markov Decision Processes.
Its key contributions are:
\begin{enumerate}
%\subsection{Our Contribution:}
\item We propose a new RL algorithm for maximizing rewards for unknown MDPs, that utilizes the RBMLE principle while making control decisions.
\item We analyze the finite-time performance, i.e., the learning regret, of the proposed learning algorithm. We show that the regret is $O (\log T)$.
\item We provide simulation results to show that RBMLE outperforms UCRL2 and TSDE. 
\end{enumerate}
With these results, together with the positive results in the context of stochastic MABs \citep{liu_20} and linear contextual bandits \citep{hung_20}, 
RBMLE provides a second tool for reinforcement learning, complementing the UCB approach.
\section{System Model}
We consider the MDP described in Section~\ref{Introduction}, assuming,
without loss of generality, that $r(x,u) \in [0,1]$ for all $ (x,u) \in X \times U$.
We denote by $\Pi_{sd}$ the set of all stationary deterministic policies that map $X$ into $U$,
by $\Pi_{s}$ the set of all stationary possibly randomized policies,
and by $\Pi\ust(\te) := \arg \max_{\pi \in \Pi_{sd}} J(\te,\pi)$ the set of all optimal stationary deterministic policies for the parameter $\theta$.

\begin{definition}(Unichain MDP)
	Under a stationary policy $\pi$, let
%	 be a stationary policy, and consider a controlled Markov process that evolves under $\pi$. 
%	Let 
	$\tau^{\pi}_{x,y}$ denote the time taken to hit the state $y$ when started in state $x$. The MDP is called unichain if $\bE [\tau^{\pi}_{x,y}]$ is finite for all $(x,y,\pi)$. 
%	In other words, for such an MDP, it is possible to transition between any two state values in finite time, under any stationary policy.
\end{definition}

\begin{definition}(Mixing Time)\label{def:cond}
	For a unichain MDP with parameter $p$, its mixing time $T_{p}$ is defined as 
	\begin{align*}
		T_p := \max_{\pi \in \Pi_{s}}  \max_{x,y\in X} \bE [\tau^{\pi}_{x,y}].
	\end{align*}
	Its ``conductivity'' $\kappa_p$ is %{\color{red} This definition needs fixing. There is a $y$ in the denominator}
	\begin{align*}
		\kappa_p := \max_{\pi \in \Pi_s}  \max_{x\in X}\frac{\max_{y\neq x} \bE [\tau^{\pi}_{x,y}]  }{2\bE [\tau^{\pi}_{x,x}] }.     
	\end{align*}
\end{definition}

\begin{definition}(Gap)
	For a stationary policy $\pi$, let $\Delta(\pi)$ denote the difference between the optimal average reward and the average reward accrued by $\pi$ under parameter $p$,  and by $\Delta_{\min}$ the gap between the
	rewards of the best and second best policies,
	\begin{align*}
		\Delta(\pi) := J\ust(p) - J(p,\pi) \text{ and }  \Delta_{\min}:=\min_{\pi\notin \Pi\ust(p)}\Delta(\pi). 
	\end{align*}
\end{definition}
\begin{definition}(Kullback-Leibler divergence) For $p_2=\{p_2(x)\}$ absolutely continuous with respect to $p_2=\{p_2(x)\}$ the KL-divergence between them is	
\begin{align}
		KL(p_1,p_2):=\sum_{x \in X} p_1(x) \log \frac{p_1(x)}{p_2(x)}.
	\end{align}    
\end{definition}
For two integers $x,y$, we use $[x,y]$ to denote the set $\{x,x+1,\ldots,y\}$ and for $a,b\in\mathbb{R}$ we let $a\vee b := \max\{a,b\}$.

\begin{assumption}\label{assum:1}
We assume that the following information is known about the unknown transition probabilities $p(x,y,u)$ :
\begin{itemize}
\item the set of tuples $(x,y,u)$ for which $p(x,y,u)=0$, 
\item a lower bound $p_{\min}$ on the non-zero transition probabilities,
\begin{align}\label{def:p_min}
p_{\min} := \min\limits_{(x,y,u): p(x,y,u)>0} p(x,y,u).
\end{align}
\end{itemize} 
\rm We let $\Theta$ denote the set 
%of ``allowable'' MDP parameter $\theta$ such that $p \in \Theta$. Under Assumption~\ref{assum:1}, the set $\Theta$ is given as follows,
\begin{align}
\Bigg\{ &\te \in [0,1]^{|X| \times |X| \times |U|}: \te(x,y,u)=0\mbox{ if  }~p(x,y,u)=0, \sum_{y \in X} \te(x,y,u) =1
\;\forall\; (x,u), \te(x,y,u)\ge 0 \Bigg\}. \label{def:Theta}
\end{align}
We occasionally refer to $\te \in \Theta$ as a ``parameter" describing the model or transition probabilites.
\end{assumption}

%\subsection{Learning Rule and Regret}
%A learning rule $\phi$ is a collection of maps $\cF_t\mapsto U$ that at each time $t$, chooses the control $u(t)$ on the basis of $\cF_t$, i.e., the information available to it. Consider the controlled Markov process $x(t),~t=1,2,\ldots,$ operating under the application of a learning rule $\phi$. The regret of $\phi$ is then defined as follows,
%\begin{align}\label{def:regret}
%R(\phi,p,T) :=  T~J\ust(p) - \sum_{t=1}^{T} r(x(t),u(t))~ \forall~ T=1,2,\ldots
%\end{align}
%Intuitively, the regret $R(\phi,p,T)$ measures the sub-optimality in the performance or the rewards collected by $\phi$, that arises due to the fact that the true MDP parameter $p$ is not known to the agent. We are interested in designing learning rules for which the expected regret $\bE \left( R(\phi,p,T) \right)$ is ``low''. 
\section{The RBMLE-Based Learning Algorithm}
%The agent interacts with an environment that is modeled as a controlled Markov process~\citep{puterman2014markov}. 
For an MDP parameter $\te\in \Theta$, denote by $\te(x,u)$ the vector $\left\{\te(x,y,u)\right\}_{y\in X}$.
Let $n(x,u;t)$ be the number of times an action $u$ has been applied in state $x$ until time $t$, and by $n(x,y,u;t)$ the number of $x\to y$ one-step transitions under the application of action $u$. Let $\hat{p}(t)=\left\{ \hat{p}(x,y,u;t)  \right\}$ be the empirical estimate of $p$ at time $t$, with $\hat{p}(x,y,u;t)$ the MLE of $p(x,y,u)$ at time $t$,
\begin{align}\label{def:empirical}
	\hat{p}(x,y,u;t) : =\frac{n(x,y,u;t)}{n(x,u;t)\vee 1} \;\forall\; x,y \in X \text{ and } u \in U .
\end{align} 
{\bf{The RBMLE algorithm:}} 
The algorithm evolves in an episodic manner. 
For episode $k$, we let $\tau_k$ denote its start time and $\cE_k :=[\tau_k,\tau_{k+1} -1] $ the set of time-slots that comprise it. The episode durations increase exponentially with episode index, with $|\cE_k|=2^k$. Clearly $\tau_k = \sum_{\ell = 1}^{k-1}|\cE_\ell|$.% $T = \sum_{\ell = 1}^{K}|\cE_\ell|$ and $K=\lceil\log_2 T \rceil $.
Throughout we abbreviate $\hat{p}(\tau_k)$ as $\hat{p}_{k}$, $n(x,y;\tau_k)$ as  $n_{k}(x,y)$ and $n(x,y,u;\tau_k)$ as  $n_{k}(x,y,u)$.
At the beginning of each episode $\cE_k$, the RBMLE determines\footnote{Throughout the paper, a pre-specified priority order is used to choose a particular maximizer in $arg\max$ if needed.}:
% the algorithm derives the following two quantities by solving the  (\ref{eq:rbmle_2}).
\begin{enumerate}[(i)]
\item  A ``reward-biased MLE'' $\te_k$:
\begin{align}
&\te_k \in \arg\max_{\te \in \Theta} \bigg\{ \max_{\pi \in \Pi_{sd}}\bigg\{ \alpha(\tau_k) J(\theta,\pi) -\sum_{(x,u)} n_k(x,u)KL\left( \hat{p}_{k}(x,u),\te(x,u)  \right) \bigg\} \bigg\}, \label{eq:rbmle_2}    \\
&\text{where }\alpha(t) :=a\log\left( t^b |X|^{2} |U|\right), \text{ with } b>2, \text{ and } 
a>\frac{|X|^3|U|}{2p_{\min} \Delta_{\min}}. \label{eq:alpha}
\end{align}

\item A stationary deterministic policy $\pi_k \in \arg\max_{\pi \in \Pi_{sd} } J(\te_k,\pi)$ that is optimal for $\te_k$.
\item The action applied for $t \in \cE_k$ is $u(t) = \pi_k (x(t))$.
\end{enumerate}
% The resulting optimal policy and MDP parameter $\te$ are denoted $\pi_k$ and $\te_k$ respectively.
{\bf{An equivalent Index description of the RBMLE learning algorithm:}}
%An equivalent index description of RBMLE is as follows: 
At the beginning of each episode $\cE_k$, RBMLE attaches an index $I_k(\pi)$ to each $\pi\in\Pi_{sd}$,
\begin{align}\label{eq:index}    
I_k(\pi):=\max_{\te \in \Theta}\bigg\{\alpha(\tau_k) J(\theta,\pi) -\sum_{(x,u)} n_k(x,u)KL\left( \hat{p}_{k}(x,u),\te(x,u)  \right) \bigg\}.
\end{align}
Within $\cE_k$ it implements the policy $\pi_k$ that has the largest index, i.e.,
\begin{align}
	\pi_k \in \arg\max_{\pi \in \Pi_{sd}} I_k(\pi). 
\end{align}
For each $ \pi\in \Pi_{sd}$, define $\theta_{k,\pi}$ as  %{\color{red} Akshay: Why does RBMLE need to choose this?}
%a maximizer $\theta_{k,\pi}$ :
\begin{align}\label{eq:index_theta}    
	\theta_{k,\pi} \in \arg \max_{\te \in \Theta}\bigg\{\alpha(\tau_k) J(\theta,\pi) -\sum_{(x,u)} n_k(x,u)KL\left( \hat{p}_{k}(x,u),\te(x,u)  \right) \bigg\}.
\end{align}

%\begin{algorithm}[H]
%	\caption{RBMLE}
%	\label{algo}
%	\SetKwInOut{Input}{Input}
%	\SetKwInOut{Initialize}{Initialize}
%	\Input{State-space $X$, Action-space $U$}
%	\Initialize{Set $t=1$ and $k=1$, Observe the initial state $x(1)$.}
%	\For{$k \le K$}{
%		Compute the empirical estimate $\hat{p}(\tau_k)$ of the MDP as in~\eqref{def:empirical}\\
%		Solve~(\ref{eq:rbmle_2}) in order to obtain $\pi_k$\\
%		\While{$t-\tau_k<|\cE_k|$}{\begin{enumerate}
%								\item Apply $u(t)= \pi_k(x(t))$, observe reward $r(x(t),u(t))$, and observe next state $x(t+1)$.
%								\item Update $n(x(t),u(t))$ and $n(x(t),x(t+1),u(t))$
%								\item Set $t:=t+1$.
%							\end{enumerate} }
%
%	}
%
%\end{algorithm}
\section{Preliminary Results}
	Define the following ``confidence interval'' $\cC(t)$ at time $t$ associated with the empirical estimate $\hat{p}(t)$,
	\begin{align}\label{def:d1}
	\cC(t): =
	\bigg\{ \te \in \Theta: |\te(x,y,u)-\hat{p}(x,y,u;t) |  \le d_1(x,u;t),\forall (x,y,u) \in X \times X \times U \bigg\},
	\end{align}
%	where $\Theta$ is the set of allowable MDPs defined in~\eqref{def:Theta},
	where
	\begin{align}\label{eq:d1}
	d_1(x,u;t):= \sqrt{\frac{ \log\left( t^b |X|^{2} |U|\right) }{n(x,u;t)}  },
	\end{align}
and $b> 2$. Define also the set $\cG_1$,
\begin{align}\label{eq:goodset}
	\cG_1 := \left\{\omega:  p \in \cC(t), \; \forall\; t \in \mathbb{N} \right\}.
\end{align}
\begin{lemma}\label{lemma:confidence}
		The probability that $p$ lies in $\cC(t)$ is bounded as follows:
						\begin{align*}
			\bP(p \in \cC(t))> 1-\frac{2}{t^{2b-1}|X|^2|U|},\;\forall\;  t \in \mathbb{N}  .
		\end{align*}
%	Therefore the probability of the set $\cG_1$ (defined in \eqref{eq:goodset}) can be bounded as follows: 

%		\begin{align*}
%		\bP(\cG_1)> 1-\delta  .
%		\end{align*}
	\end{lemma}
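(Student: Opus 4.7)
The strategy is a Hoeffding inequality combined with a union bound, with one subtlety: $n(x,u;t)$ is a data-dependent count rather than a fixed sample size.

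First, fix a triple $(x,y,u)$ and introduce the auxiliary sequence $Z_k(x,y,u)$, the indicator that on the $k$-th application of action $u$ in state $x$ (throughout the entire trajectory), the Markov chain transitions to $y$. By the Markov property, every such transition is an independent draw from $p(x,\cdot,u)$, regardless of history or the (possibly history-dependent) policy in force, so $\{Z_k(x,y,u)\}_{k\ge 1}$ is an i.i.d.\ Bernoulli$(p(x,y,u))$ sequence. On $\{n(x,u;t)\ge 1\}$ we have $\hat p(x,y,u;t) = \frac{1}{n(x,u;t)} \sum_{k=1}^{n(x,u;t)} Z_k(x,y,u)$. Triples with $p(x,y,u)=0$ give $\hat p(x,y,u;t)=0$ identically and satisfy the confidence constraint trivially, so only the others need analysis.

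Second, handle the randomness of $n(x,u;t)\in[0,t]$ by a union bound over its possible values. When $n(x,u;t)=0$, $d_1(x,u;t)=+\infty$ and the constraint holds vacuously. For $n\in[1,t]$, the event $\{|\hat p(x,y,u;t)-p(x,y,u)|>d_1(x,u;t),\; n(x,u;t)=n\}$ is contained in
\begin{align*}
\Big\{\Big|\tfrac{1}{n}\sum_{k=1}^n Z_k(x,y,u) - p(x,y,u)\Big| > \sqrt{\log(t^b|X|^2|U|)/n}\Big\},
\end{align*}
which is a deviation event about i.i.d.\ Bernoullis with no stopping rule involved. Hoeffding's inequality, using the exact cancellation $n\cdot d_1^2(x,u;t)=\log(t^b|X|^2|U|)$ valid on $\{n(x,u;t)=n\}$, bounds this by $2\exp(-2\log(t^b|X|^2|U|)) = 2/(t^{2b}|X|^4|U|^2)$.

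Third, summing over $n\in[1,t]$ costs a factor of $t$, and union-bounding over the at most $|X|^2|U|$ triples $(x,y,u)$ yields
\begin{align*}
\bP(p\notin\cC(t)) \le |X|^2|U|\cdot t\cdot \frac{2}{t^{2b}|X|^4|U|^2} = \frac{2}{t^{2b-1}|X|^2|U|},
\end{align*}
as desired. The main obstacle is the passage from the joint event $\{\text{deviation},\, n(x,u;t)=n\}$ to the unconditional deviation event for the first $n$ terms of the $Z_k$'s: one must avoid \emph{conditioning} on $\{n(x,u;t)=n\}$, which would distort the distribution of those $Z_k$'s, and instead drop the count-event via monotonicity, relying on the fact that the deviation event is a statement about a fixed prefix of an i.i.d.\ sequence. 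Everything else is arithmetic.
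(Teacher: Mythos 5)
Your proof is correct and follows essentially the same route as the paper's: a Hoeffding/Azuma bound for a fixed number of visits, a union bound over the possible values of $n(x,u;t)\in[1,t]$ (costing the factor of $t$), and a union bound over the $|X|^2|U|$ triples, yielding $\bP(p\notin\cC(t))\le 2/(t^{2b-1}|X|^2|U|)$. You are in fact more careful than the paper about the one genuine subtlety---passing from the joint event with the random count to a deviation event for a fixed prefix of the i.i.d.\ transition indicators via containment rather than conditioning---which the paper glosses over with ``consider the scenario where the number of visits is fixed.''
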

\begin{lemma} \label{lemma:pinkser} 
%Consider operation of the RBMLE algorithm that utilizes the bias-term $\alpha(t)$ as in (\ref{eq:alpha}). Let $\pi$ be a stationary policy. Then, the distance between $\theta_{k,\pi}$ (as defined in (\ref{eq:index_theta})) and the empirical estimate $\hat{p}_k$ at the beginning of the episode $k$ can be upper bounded as follows:
		\begin{align*}
		|\theta_{k,\pi}(x,y,u)-\hat{p}_k(x,y,u)|\le d_2(x,u;\tau_k), \;\forall\; (x,y,u) \in X \times X \times U,
		\end{align*}
	where
		\begin{align}\label{eq:d2}
	d_2(x,u;t):= \sqrt{\frac{\alpha(t)}{2n(x,u;t)}}, \forall (x,u)\in X\times U.
		\end{align}
		\end{lemma}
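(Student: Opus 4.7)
The plan is to exploit the fact that $\theta_{k,\pi}$ is the maximizer of the expression in (\ref{eq:index_theta}), by comparing the value it attains against the value obtained at the feasible candidate $\hat{p}_k$ itself. First I would observe that $\hat{p}_k \in \Theta$: by construction the empirical transition matrix respects the prescribed zero pattern (any $x\to y$ transition under $u$ with $p(x,y,u)=0$ is never observed), and its rows sum to one on every $(x,u)$ pair with $n_k(x,u)\ge 1$ (for unvisited $(x,u)$ the associated KL term is multiplied by $n_k(x,u)=0$ and drops out under the usual $0\cdot\infty=0$ convention, so the comparison is unaffected). Since $KL(\hat{p}_k(x,u),\hat{p}_k(x,u))=0$, the defining optimality of $\theta_{k,\pi}$ yields
\begin{align*}
\alpha(\tau_k)\, J(\theta_{k,\pi},\pi) - \sum_{(x,u)} n_k(x,u)\, KL\!\left(\hat{p}_k(x,u),\theta_{k,\pi}(x,u)\right) \;\ge\; \alpha(\tau_k)\, J(\hat{p}_k,\pi).
\end{align*}

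Rearranging and using $r(x,u)\in (0,1]$, so that $J(\theta_{k,\pi},\pi)-J(\hat{p}_k,\pi)\le 1$, I would conclude that the aggregate KL term is at most $\alpha(\tau_k)$. Nonnegativity of each summand then produces the per-pair inequality
\begin{align*}
KL\!\left(\hat{p}_k(x,u),\theta_{k,\pi}(x,u)\right) \;\le\; \frac{\alpha(\tau_k)}{n_k(x,u)}
\end{align*}
for every $(x,u)$ with $n_k(x,u)\ge 1$.

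The final step is to convert this KL bound into a pointwise bound via Pinsker's inequality $\|p_1-p_2\|_{TV}\le\sqrt{KL(p_1,p_2)/2}$, combined with the elementary event-wise inequality $|p_1(y)-p_2(y)|\le\|p_1-p_2\|_{TV}$ applied with $A=\{y\}$. Chaining these gives
\begin{align*}
|\hat{p}_k(x,y,u)-\theta_{k,\pi}(x,y,u)| \;\le\; \sqrt{\frac{KL(\hat{p}_k(x,u),\theta_{k,\pi}(x,u))}{2}} \;\le\; \sqrt{\frac{\alpha(\tau_k)}{2\, n_k(x,u)}} \;=\; d_2(x,u;\tau_k),
\end{align*}
which is exactly the claimed bound.

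I do not anticipate a serious obstacle; the only nuance worth flagging is that one must invoke the sharp event-wise form $|p_1(y)-p_2(y)|\le\|p_1-p_2\|_{TV}$ rather than the cruder $\ell^1$ relaxation $|p_1(y)-p_2(y)|\le\|p_1-p_2\|_1\le\sqrt{2\,KL(p_1,p_2)}$, which would weaken the constant by a factor of two and miss the stated $d_2$. Pairs $(x,u)$ with $n_k(x,u)=0$ are vacuous since $d_2$ is infinite there and are harmless for the downstream regret analysis.
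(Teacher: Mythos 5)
Your proposal is correct and follows essentially the same route as the paper's own proof: compare the objective at the maximizer $\theta_{k,\pi}$ against the feasible candidate $\hat{p}_k$ (whose KL term vanishes), bound the reward difference by $1$, use nonnegativity of the individual KL summands to get the per-pair bound $KL(\hat{p}_k(x,u),\theta_{k,\pi}(x,u))\le \alpha(\tau_k)/n_k(x,u)$, and finish with Pinsker's inequality in its total-variation form. The additional care you take with $\hat{p}_k\in\Theta$, the $n_k(x,u)=0$ case, and the choice of the event-wise rather than $\ell^1$ form of Pinsker is sound and only makes explicit what the paper leaves implicit.
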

We now derive a lower bound on the index of any %{\color{red} any?} 
optimal stationary policy $\pi \ust \in \Pi \ust (p)$ that holds with
high probability. 
	\begin{lemma}\label{lemma:suff_prec_optimal}
%Consider the operation of RBMLE algorithm with the bias-term $\alpha(t)$ set according to~\eqref{eq:alpha}.	
On the set $\cG_1$, the index $I_k(\pi\ust)$ of any optimal policy $\pi \ust \in \pi \ust (p)$ is lower bounded as follows:
		\begin{align*}
		I_k(\pi\ust) \geq  \alpha(\tau_k)(1-\gamma)J\ust(p), ~\forall k =1,2,\ldots,K,
		\end{align*}
		where $\gamma := \frac{|X|^3|U|}{2a\cdot p_{\min} J \ust(p)}.$ %\gamma \in \left(0,\frac{\Delta_{\min}}{J\ust(p)}\right)$ is a constant{\color{red}We are assuming $J\ust(p) >0$ and need to say that. Perhaps all $r(x,u)>0$},% {\color{red}No need to redefine %$\Delta_{\min}$ and $J \ust (p)$}$\Delta_{\min}:=\min_{\pi\notin \pi\ust(p)}\Delta(\pi)$ %{\color{red}Please-Correct-this-to-write-Arg-Min}}$ 
		%and $J \ust (p)$ is the optimal average cost for the MDP with parameter $p$.
	\end{lemma}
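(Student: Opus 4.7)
The plan is to obtain the lower bound by evaluating the inner maximization in (\ref{eq:index}) at the feasible point $\te = p$, rather than at the true maximizer $\te_{k,\pi\ust}$. Since $p\in\Theta$ by Assumption~\ref{assum:1} and since $\pi\ust$ is optimal for $p$, this yields
\begin{align*}
I_k(\pi\ust) \;\ge\; \alpha(\tau_k) J\ust(p) \;-\; \sum_{(x,u)} n_k(x,u)\, KL\!\left(\hat p_k(x,u),\, p(x,u)\right).
\end{align*}
It therefore suffices to show that, on $\cG_1$, the residual KL penalty is at most $\gamma\,\alpha(\tau_k) J\ust(p)$.

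To bound the KL penalty I would apply the standard chi-squared upper bound
\begin{align*}
KL(q_1,q_2) \;\le\; \sum_y \frac{(q_1(y) - q_2(y))^2}{q_2(y)},
\end{align*}
a direct consequence of $\log t \le t-1$, with $q_1 = \hat p_k(x,u)$ and $q_2 = p(x,u)$. By the zero-pattern knowledge built into Assumption~\ref{assum:1}, $\hat p_k$ and $p$ share the same support, so only coordinates with $p(x,y,u)\ge p_{\min}$ contribute to the sum. On $\cG_1$, Lemma~\ref{lemma:confidence} guarantees $|\hat p_k(x,y,u) - p(x,y,u)| \le d_1(x,u;\tau_k)$, so
\begin{align*}
KL\!\left(\hat p_k(x,u), p(x,u)\right) \;\le\; \frac{|X|}{p_{\min}}\, d_1^2(x,u;\tau_k).
\end{align*}

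Substituting the closed form $d_1^2(x,u;\tau_k) = \alpha(\tau_k)/(a\,n_k(x,u))$ from (\ref{eq:d1}) and (\ref{eq:alpha}) causes $n_k(x,u)$ to cancel inside the sum, so that each state--action pair contributes at most $|X|\alpha(\tau_k)/(a p_{\min})$. Summing over all $|X||U|$ state--action pairs bounds the penalty by $|X|^2|U|\alpha(\tau_k)/(a p_{\min})$, which, after factoring $\alpha(\tau_k) J\ust(p)$ out of the right-hand side, is already tighter than $\gamma\,\alpha(\tau_k) J\ust(p)$ (by a factor of $|X|/2$, comfortably absorbed into the constant in $\gamma$). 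The only delicate step is the KL-to-$\chi^2$ conversion, namely accounting for the zero-support coordinates via Assumption~\ref{assum:1} (which guarantees both absolute continuity of $\hat p_k$ with respect to $p$ and the uniform lower bound $p_{\min}$ on non-zero entries used to control the $1/p(x,y,u)$ factors); the remainder is mechanical substitution of the closed forms for $d_1$ and $\alpha$.
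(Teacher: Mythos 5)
Your proposal is correct and follows essentially the same route as the paper: lower-bound the maximization defining $I_k(\pi\ust)$ by evaluating at the feasible point $\te = p$ (using $J(p,\pi\ust)=J\ust(p)$), then bound the residual KL penalty via a reverse-Pinsker-type inequality involving $p_{\min}$, the confidence width $d_1$ on $\cG_1$, and the cancellation of $n_k(x,u)$ against $d_1^2$. The only difference is the intermediate inequality --- you use the $\chi^2$ upper bound $KL(q_1,q_2)\le\sum_y (q_1(y)-q_2(y))^2/q_2(y)$, giving $|X|^2|U|\alpha(\tau_k)/(ap_{\min})$, whereas the paper uses $KL(q_1,q_2)\le \bigl(\sum_y|q_1(y)-q_2(y)|\bigr)^2/(2p_{\min})$, giving exactly $|X|^3|U|\alpha(\tau_k)/(2ap_{\min})=\gamma\alpha(\tau_k)J\ust(p)$; your bound is at least as tight whenever $|X|\ge 2$, so the stated conclusion follows.
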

Next, we show that if the state-action pairs corresponding to a sub-optimal policy have been visited for a sufficiently large number of times, then its index $I_{k}(\pi)$ is lower than the index of any optimal policy. 
\begin{lemma}\label{lemma:subopt} 
%	Consider the operation of RBMLE algorithm with the bias-term $ \alpha(t)$ is as in (\ref{eq:alpha}) and a sample path from the set $\cG_1$ (defined in (\ref{eq:goodset})). 
	Let $\pi \notin \Pi \ust (p)$ be any sub-optimal stationary deterministic policy. Suppose that the number of visits to each $(x,\pi(x))$ until $\tau_k$ is lower bounded as follows,
	\begin{align}\label{eq:n_bound}
		n(x,\pi(x);\tau_k)> \frac{ \alpha(\tau_k)}{c^2} \; \forall x \in X,
	\end{align}where $c:=\frac{\beta\Delta_{\min}}{\kappa_p|X|^2\left(\frac{1}{\sqrt{2}}+\frac{1}{\sqrt{a}}\right)}$, $\beta \in \left(0,1-\gamma\frac{J\ust(p)}{\Delta_{\min}}\right)$ and $\gamma= \frac{|X|^3|U|}{2a p_{\min} J \ust(p)}$.
Then, the index of the sub-optimal policy $\pi$ at the beginning of the episode $k$ is strictly lower than the index of any optimal policy $\pi\ust$, i.e., $I_k(\pi) < I_k(\pi \ust)$.
\end{lemma}
		\section{Regret Analysis}
		We begin by decomposing the cumulative regret of the learning rule $\phi$, $R(\phi,p,T)$ into the sum of episodic regrets as $R(\phi,p,T) := \sum_{k} \Big( J\ust(p) |\cE_k| - \sum_{t\in\cE_k} r(x(t),u(t)) \Big)$.
		Since the RBMLE algorithm implements stationary policy $\pi_k$ during $\cE_k$, we obtain the following bound on the expected $\bE R(\phi,p,T)$ (Lemma 12, \citeauthor{techreport}),
		\begin{align}\label{def:regret_dec}
		\bE  R(\phi,p,T) \le \sum_{k=1}^{K(T)} \bE \Big( J\ust(p) |\cE_k| - |\cE_k| J(\pi_k,p) \Big)  + T_pK(T),
		\end{align}
		where $K(T)$ is the number of episodes till $T$. The first summation can be regarded as the sum of the regrets arising from the policies chosen
		in the episodes $k=1,2, \ldots, K(T)$, assuming that each episode is started with a steady -state distribution for the state corresponding to the policy
		chosen in that episode. The last term $T_p K(T)$ can be regarded as the additional regret due to not starting in a steady state in each episode. We now present the main result of this paper which shows that the expected regret of the RBMLE algorithm is bounded by $c' \log T + c''$ for all $T$:
		\begin{theorem}\label{th:regret}
%			Consider the RBMLE learning algorithm applied to adaptively control an unknown MDP $\cM$. Assume that it uses the reward bias factor $\alpha(t)$ as defined in (\ref{eq:alpha}). Then its 
The regret of the RBMLE based-policy is upper-bounded as 
			\begin{align*}
			\bE  R(\phi,p,T) \le c_1\kappa_p^2|X|^5|U|\left(\frac{\sqrt{\frac{a}{2}}+1}{\beta \Delta_{\min} }\right)^2\log T+\left(\kappa_p|X||U|+1\right)\log_2 T+C \text{   for all } T,
			\end{align*}
			where 
%			$\beta, c_1,C$ are constants such that 
			$\beta \in \left(0,1-\gamma\frac{J\ust(p)}{\Delta_{\min}}\right)$, $\gamma= \frac{|X|^3|U|}{2a p_{\min} J \ust(p)}$, $c_1 \in (0,\frac{11}{\kappa_p^2})$,
			 and
						\begin{align*}
			C=c_1\kappa_p^2|X|^5|U|\left(\frac{\sqrt{\frac{a}{2}}+1}{\beta \Delta_{\min} }\right)^2\log\left(|X|^{2} |U|\right)+\left(\kappa_p|X||U|+1\right)+|X||U|+\frac{8}{|X|^2|U|}.
		\end{align*}	
		\end{theorem}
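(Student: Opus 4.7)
The plan is to combine the regret decomposition (\ref{def:regret_dec}) with the three structural lemmas, \ref{lemma:confidence}, \ref{lemma:suff_prec_optimal}, and \ref{lemma:subopt}. The additive $T_p K(T)$ term is immediately of the advertised form: the doubling schedule $|\cE_k|=2^k$ gives $K(T)\le \log_2 T+1$, and $T_p\le \kappa_p|X||U|$ (from Definition \ref{def:cond}) accounts for the $(\kappa_p|X||U|+1)\log_2 T$ summand and its contribution to $C$. The remaining work is therefore to bound
\[
\sum_{k=1}^{K(T)}\bE\bigl[(J\ust(p)-J(\pi_k,p))|\cE_k|\bigr],
\]
each summand being at most $|\cE_k|$ since $r(x,u)\in(0,1]$.

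I would then split this sum by whether the trajectory lies in the confidence event $\cG_1$ from (\ref{eq:goodset}). On $\cG_1^c$, using the trivial bound $J\ust(p)-J(\pi_k,p)\le 1$ and Lemma \ref{lemma:confidence} one gets $\bE[\text{regret}\cdot\mathbf{1}_{\cG_1^c}]\le\sum_{t=1}^{\infty}2t^{-(2b-1)}/(|X|^2|U|)$, which is summable since $b>2$ and furnishes the $8/(|X|^2|U|)$ piece of $C$. On $\cG_1$, Lemmas \ref{lemma:suff_prec_optimal} and \ref{lemma:subopt} combine to imply that $\pi_k\in\Pi\ust(p)$ whenever every sub-optimal policy $\pi$ satisfies $n_k(x,\pi(x))>\alpha(\tau_k)/c^2$ for all $x$. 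Taking the contrapositive, any episode $k$ in which RBMLE plays a sub-optimal $\pi_k$ must have $n_k(x,\pi_k(x))\le \alpha(\tau_k)/c^2$ for some $x$, with $c$ as in Lemma \ref{lemma:subopt}.

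This per-episode structural condition is then converted into a cumulative-time budget for sub-optimal play. Inside a single episode of length $|\cE_k|$ executing $\pi_k$, the unichain property and the conductivity $\kappa_p$ imply the expected number of visits to each recurrent state of $\pi_k$ is at least $|\cE_k|/(2\kappa_p)$, up to an additive $T_p$ transient which is already absorbed in the $T_pK(T)$ term. Aggregating over the episodes in which any fixed sub-optimal $\pi$ is chosen, monotonicity of $\alpha$ and $\tau_k\le T$ yield
\[
\sum_{k:\,\pi_k=\pi}|\cE_k|\;\le\;2\kappa_p|X|\cdot\frac{\alpha(T)}{c^2}+O(T_pK(T)),
\]
and a union bound over the $|X||U|$ state--action pairs that can trigger the under-exploration condition produces a total sub-optimal time of order $\kappa_p|X|^2|U|\,\alpha(T)/c^2$. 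Substituting $c=\beta\Delta_{\min}/\bigl(\kappa_p|X|^2(1/\sqrt{2}+1/\sqrt{a})\bigr)$ and $\alpha(T)=a\log(T^b|X|^2|U|)$, and weighting each sub-optimal step by $\Delta(\pi_k)\le 1$, yields the dominant $c_1\kappa_p^2|X|^5|U|\bigl((\sqrt{a/2}+1)/(\beta\Delta_{\min})\bigr)^2\log T$ term, while the $\log(|X|^2|U|)$ portion of $\alpha(T)$ reproduces the remaining piece of $C$.

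The main obstacle will be making the third paragraph rigorous: translating the single-boundary structural condition of Lemma \ref{lemma:subopt} into a global visit budget requires handling random non-stationary within-episode visits via the conductivity $\kappa_p$ rather than just the mixing time $T_p$, dominating the varying thresholds $\alpha(\tau_k)/c^2$ by the single quantity $\alpha(T)/c^2$, and carrying out the union bound over state--action pairs without incurring extra logarithmic factors. Everything else is either direct from the previously proved lemmas or a routine algebraic substitution to match the claimed constants.
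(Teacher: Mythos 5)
Your overall architecture matches the paper's: the same decomposition \eqref{def:regret_dec}, the same split on the confidence event $\cG_1$ with the $8/(|X|^2|U|)$ tail from Lemma \ref{lemma:confidence}, and the same use of Lemma \ref{lemma:subopt} in contrapositive form to characterize sub-optimal episodes by under-exploration of some $(x,\pi_k(x))$. However, the step you yourself flag as ``the main obstacle'' in your third paragraph is a genuine gap, and the way you propose to handle it does not work as stated. You argue that the \emph{expected} number of visits to each recurrent state within an episode is at least $|\cE_k|/(2\kappa_p)$ and then aggregate these expectations into the budget $\sum_{k:\pi_k=\pi}|\cE_k|\le 2\kappa_p|X|\,\alpha(T)/c^2+O(T_pK(T))$. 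But the under-exploration condition extracted from Lemma \ref{lemma:subopt} is a \emph{pathwise} condition ($n(x,\pi_k(x);\tau_k)\le n_c$ for some $x$ on the realized trajectory), so an in-expectation lower bound on visits cannot be inverted into a deterministic bound on the total length of sub-optimal episodes without a concentration argument. The paper closes exactly this gap by introducing a second high-probability event $\cG_2$ via Lemma \ref{lemma:azum_visits} (an Azuma--Hoeffding-based bound from Auer--Ortner), which guarantees $n(x,u;T)\ge y_{x,u}/2-\sqrt{y_{x,u}\log T}$ with $y_{x,u}=\sum_{k\in\cK_{x,u}}\lfloor|\cE_k|/(2T_p)\rfloor$, and then inverts this inequality through the elementary Lemma \ref{lemma:bound_on_c} to get $y_{x,u}\le c_1 n_c$, hence $\sum_{k\in\cK_{x,u}}|\cE_k|\le 2T_p c_1 n_c+2|\cK_{x,u}|T_p$. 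Your accounting also omits the contribution of the failure event $\cG_2^c$, which has probability at most $|X||U|/T$ and supplies the $|X||U|$ additive term in the constant $C$; without the $\cG_2$ device there is nothing in your budget that produces that term.

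A secondary issue: your claim that $T_p\le\kappa_p|X||U|$ ``from Definition \ref{def:cond}'' does not follow from the definition given there ($\kappa_p$ is defined as a ratio of hitting times, not as a normalization of $T_p$ by $|X||U|$), so the $(\kappa_p|X||U|+1)\log_2 T$ term needs a different justification. Also note the paper aggregates sub-optimal time over state--action pairs $(x,u)$ via the sets $\cK_{x,u}$ rather than over policies $\pi$; your per-policy aggregation would require a union bound over $|U|^{|X|}$ policies unless you reorganize it by $(x,u)$ as the paper does.
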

	\begin{proof}
	The decomposition~\eqref{def:regret_dec} shows that: (a) Episodic regret is $0$ in those episodes in which $\pi_k$ is optimal, i.e., $\pi_k\in \Pi\ust(p)$. (b) if $\pi_k$ is not optimal then the episodic regret is bounded by the length of the episode $|\cE_k |$,
		since the magnitude of rewards is less than $1$. \\
Let ${\cG}_2$ denote the set with $\bP\left({\cG}_2 \right) \geq 1- \frac{|X||U|}{T}$ (Lemma 11, \citeauthor{techreport}) such that 
		\begin{align}\label{eq:g2}
			n(x,u;T) \ge  \frac{y_{x,u}}{2}  - \sqrt{y_{x,u}\log T} \text{ for all } (x,u) \text{ on } {\cG}_2,
		\end{align}
where $y_{x,u} := \sum_{k \in \cK_{x,u} } \Bigl\lfloor \frac{  |\cE_k | }{2 T_p} \Bigr\rfloor$  ,
and $\cK_{x,u}$ denotes the set of indices of those episodes up to time $T$ in which action $u$ is taken when state is equal to $x$.
Define the ``good set" $\cG := \cG_1 \cap {\cG}_2 $. We first consider the regret on $\cG$. \\
%	On the ``good set" $\cG$, Lemma \ref{lemma:azum_visits} implies that 
%		\begin{align*}
%			n(x,u;T) \ge  \frac{y}{2}  - \sqrt{y\log T}	\text{ where } y = \sum_{k \in \cK_{x,u} } \Bigl\lfloor \frac{  |\cE_k | }{2 T_p} \Bigr\rfloor  .
%		\end{align*}
\textit{(i) Regret due to suboptimal episodes on $\cG$}: Define $n_c :=  \alpha(T)\left(\frac{\kappa_p|X|^2}{\beta\Delta_{\min}}\left(\frac{1}{\sqrt{2}}+\frac{1}{\sqrt{a}}\right)\right)^2$.

On the ``good set'' $\cG$, confidence intervals $\cC(t)$, defined in~\eqref{def:d1}, hold true for all episode starting times $\tau_k$, $k \in [1,K(T)]$, and also the conclusions of (\ref{eq:g2}) are true. Hence it follows from Lemma \ref{lemma:subopt} that if $n(x,u;\tau_k) >  n_c$ for all $(x,u)$ then the regret in $\cE_k$ is $0$. Otherwise, there exists at least one state, action pair $(x,u)$ with $n(x,u;\tau_k) \le n_c$. We now upper bound the number of time-steps in such ``sub-optimal'' episodes $\cK_{x,u}$ in which control $u$ is applied in state $x$. 	Since $n(x,u;T)\le n_c$, we have $n_c \ge  \frac{y_{x,u}}{2}  - \sqrt{y_{x,u}\log T}$. Note that $n_c \ge\kappa_p^2\log T$.  Then, there exists $c_1<\frac{11}{\kappa^2_p} $ such that $y_{x,u} \leq c_1n_c$ (Lemma 13, \citeauthor{techreport}).
		So
		$
			\sum_{k \in \cK_{x,u} }  |\cE_k |  \le 2 T_pc_1n_c+2|\cK_{x,u}|T_p
		$.
		Note that $\cK_{x,u}<K(T)$, where $K(T)$ is the total number of episodes till $T$. We let $\mathcal{R}_1$ be the total regret until $T$ due to suboptimal episodes on the good set $\mathcal{G}$. Then 
		\begin{align}\label{ineq:r1}
			\mathcal{R}_1  \le \sum_{(x,u)}\sum_{k \in \cK_{x,u} } |\cE_k |\le 2 |X||U|T_p(c_1n_c+K(T)),
		\end{align}
		where $\mathcal{R}_1$ is the expected regret on $\cG$.
	\\\\
	\textit{(ii) Regret on $\cG_1^c$}: For any episode $k \in [0,K(T)]$, the probability of failure of the confidence interval at the beginning of the episode $C(\tau_k)$ is upper bounded by $\frac{2}{|X|^2|U|\tau_k^{2b-1}}$ as shown in Lemma~\ref{lemma:confidence}.  
			The expected regret in each such episode is bounded by the length of the episode $\cE_k$. We let $\mathcal{R}_3$ be the total expected regret till $T$ due to the failure of confidence intervals. It can be upper-bounded as:			
\begin{align*}
				\mathcal{R}_3& \leq \sum_{k=1}^{K(T)} \frac{2(\tau_{k+1}-\tau_{k})}{|X|^2|U|\tau_k^{2b-1}}\le \sum_{k=1}^{\infty} \frac{2}{|X|^2|U|\tau_k^{2b-2}}\left(\frac{\tau_{k+1}}{\tau_k}\right)		\le \sum_{k=1}^\infty \frac{4}{|X|^2|U|\tau_k^{2b-2}} \le \frac{8}{|X|^2|U|}.
\end{align*}
%\begin{align*}
%				\mathcal{R}_4& \leq \sum_{k=1}^{K(T)} \frac{2}{|X|^2|U|\tau_k^{2b-1}}(\tau_{k+1}-\tau_{k})\le \sum_{k=1}^{K(T)} \frac{2}{|X|^2|U|\tau_k^{2b-2}}\left(\frac{\tau_{k+1}}{\tau_k}\right)\\&=
%				\sum_{k=1}^{K(T)} \frac{4}{|X|^2|U|\tau_k^{2b-2}}\le \sum_{k=1}^\infty =\frac{4}{|X|^2|U|\tau_k^{2b-2}} \le \frac{8}{|X|^2|U|}
%\end{align*}
\textit{(iii) Regret on ${\cG}_2^c$}: The probability of the set where the conclusions of (\ref{eq:g2}) do not hold true for a state-action pair $(x,u)$ is upper bounded by $\frac{|X||U|}{T}$. Since the sample-path regret can be trivially upper bounded by $T$, it follows that $\mathcal{R}_2\le |X||U|$. \\\\
\textit{(iv) Additional regret due to not starting in a steady state in each episode}: The RBMLE algorithm implements the policy $\pi_k$ at the beginning of episodes $k$. The total expected reward in the episode depends on the state at the beginning of the episode $x(\tau_
k)$ (Lemma 12, \citeauthor{techreport}). The policy incurs an additional loss if it starts in an unfavorable state at $\tau_k$ which is upper bounded by $T_p$ in each episode. Let $\mathcal{R}_4$ be the total expected regret due to not starting in a steady state in each episode. Then $\mathcal{R}_4 \le K(T)T_p$ where, $K(T)=\lceil\log_2 T \rceil$ is the total number of episodes till $T$.\\\\
The proof is completed by adding the bounds on $\mathcal{R}_1,\mathcal{R}_2,\mathcal{R}_3$ and $\mathcal{R}_4$.
\end{proof}

%\section{Estimation of $J\ust,\Delta_{\min}$}
%Note that RBMLE requires the knowledge of the parameters $J\ust,\Delta_{\min}$ in order to choose $\alpha(t)$ \textcolor{red}{which equns?}. Since the assumption that these are known is impractical, in this section we remove it.
%
%The proposed algorithm samples each state-action pair $2 D \log T$ times in the beginning. It uses these samples in order to generate empirical estimates $\hat{p}_{s,1},\hat{p}_{s,2}$\footnote{Sub-script denotes ``starting''.} of the transition probabilities. This is then used to estimate the mean rewards of all the policies, which are then used to estimate $J\ust,\Delta_{\min}$.  
%After this, it implements RBMLE as in Algorithm~\ref{algo}. In order to judge the ``quality of estimates" $\hat{p}_{s}$ it checks whether $\| \hat{p}_{s,1}-\hat{p}_{s,2}  \|_{\infty}\le \sqrt{\frac{\log(1\slash \delta_1)}{ \log(T)  }}$. If so, it then implements the RBMLE. However, if it decides that the quality is bad, then a naive RL policy such as $\epsilon_t$ greedy~\citep{sutton1998introduction} gets implemented. 
%
%It is evident that the probability with which 
\section{Simulation Experiments}
We evaluate the performance of the RBMLE algorithm by empirical comparison with UCRL2 \citep{auer_09} and Thompson Sampling. Among the many Thomspon Sampling variants, we use TSDE \citep{ouyang_17} since the simulation results in \citep{ouyang_17} show that TSDE has lower empirical regret than Lazy PSRL \citep{abbasi_15} and TSMDP \citep{gopalan_15}. We maintain the fairness of the comparison by following approach: For every state-action pair, we generate a sample path of transitions at the beginning of each experiment. We use these same samples for all three algorithms. For the sake of uniformity with UCRL2 and TSDE, the length of an episode of RBMLE is dynamically determined as follows: An episode is terminated if the number of visits to any state-action pair in the episode exceeds the total visits till the beginning of the episode. For all experiments, the bias term for RBMLE 
is $10\log t$ and the confidence parameter for UCRL2 is $0.01$. We compare the cumulative regret for three different MDPs in Figure \ref{fig:1}. In all experiments, RBMLE outperforms UCRL2 and TSDE.
\begin{figure}[t]
\centering
	
	\subfigure{
		\includegraphics[width=0.32\textwidth]{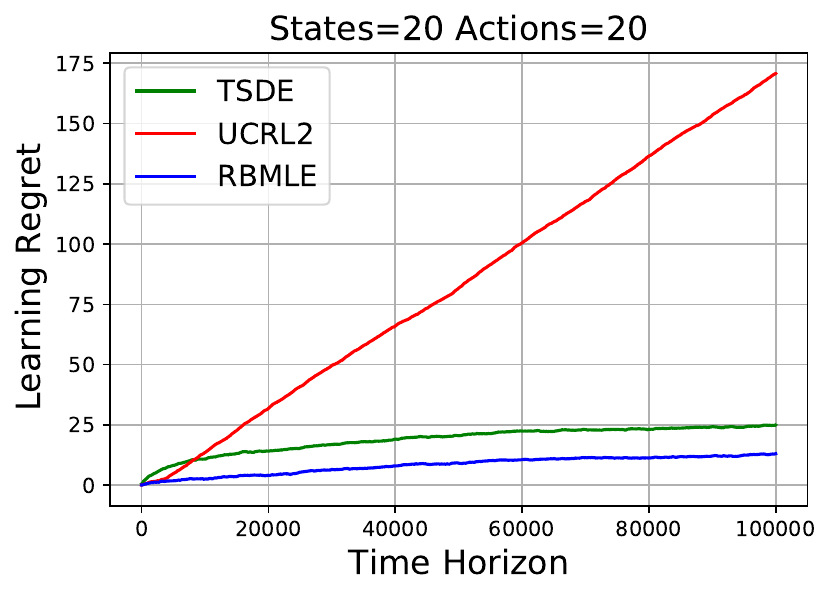}}
	\subfigure{
		\includegraphics[width=0.32\textwidth]{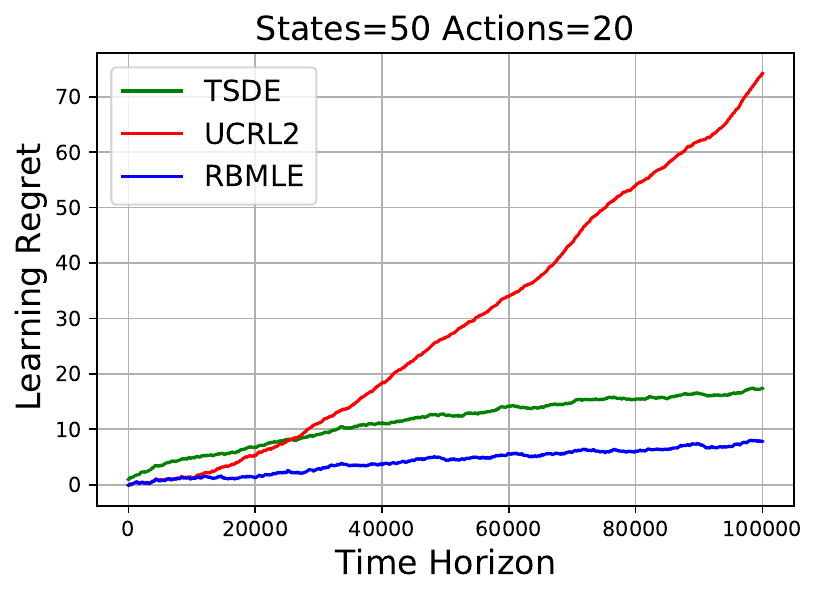}}
		\subfigure{
		\includegraphics[width=0.32\textwidth]{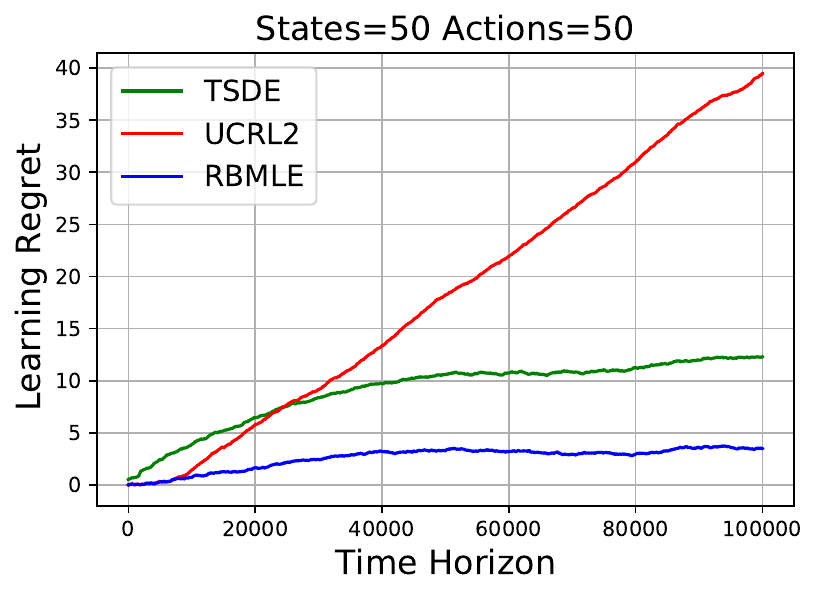}}
	{\caption{Empirical Comparision of RBMLE with UCRL2 and TSDE}\label{fig:1}}	
\end{figure}

%\begin{remark}
%		The regret bound of RBMLE is larger than that of UCRL \citep{auer_07}. For fair comparision, we set the number  of episodes $K=|X||U|\log_2\frac{ T}{|X|||U|}$. Then the prefactor for the $\log T$ term is larger for RBMLE than that of UCRL by  $\frac{(1+2\sqrt{a})^2}{\beta^2}>1$. One can choose the parameters $a,\beta$ such that this factor is no larger than $\left(\frac{2|X|^3|U|}{|X|^3|U|-p_{\min}J\ust(p)}\right)^2$.
%		\end{remark}
		
	%	\subsection{Estimation of $\Delta$ and $J\ust(p)$}
		% Acknowledgments---Will not appear in anonymized version

		\section{Concluding Remarks}
		
		A fundamental challenge in online learning is what is prosaically called the closed-loop identifiability
		problem \citep{borkar_varaiya_79}. It is the same as the ``dual control" problem raised by
		\citep{feldbaum1960dual1,feldbaum1960dual2}, or the more contemporaneous 
		``exploration vs.~exploitation" problem: When a learning algorithm begins to converge, it ceases to learn or explore.
		It was noticed by \citep{kumar_becker_82} that in a certainty equivalence context this means that the
		learnt model will automatically have a one-sided bias of having a smaller optimal reward than the true model.
		RBMLE was proposed to overcome this fundamental problem by incorporating a counteracting bias in favor of parameters with larger optimal rewards. 
		It provides a general purpose reinforcement learning algorithm for dynamic stochastic systems.
		Most of the work on RBMLE has been focused on the problem of long-term average optimiality, the context in which it was originally proposed.
		However, current applications emphasize the much finer performance of regret, which captures the growth of the total reward
		as a function of the horizon $T$.
		Recent work examining RBMLE for stochastic bandits \citep{liu_20}, linear contextual bandits \citep{hung_20}
		has shown that not only does RBMLE have optimal order of regret but it also has excellent empirical performance
		competitive or better than state of the art algorithms, and it also achieves this with low computational complexity.
		For the LQG context, recent work motivated by RBMLE also establishes near optimal regret performance \citep{abbasi2011regret}.
		With the present paper establishing optimal order of regret for reinforcement learning problems modeled as Markov Decision Processes,
		the RBMLE complements the UCB approach and provides a second tool for reinforcement learning. 
%		in this field
%		The method of favoring models with larger potential rewards is now called ``optimism in
%		the face of uncertainty." While the long-term average reward optimality of RBMLE-based methods have been established in a variety of settings, there has been no analysis of its performance vis-a-vis the finer measure of ``regret"
%		on reinforcement learning problems modeled as Markov Decision Processes.
%		This work takes the first step towards its finite-time performance analysis and establishes an order optimal
%		regret of $O(\log (t)$.
		
%		Much work remains to be done though.
%		Two important next steps are to devise efficient algorithms, and to perform
%		comprehensive comparative evaluation of empirical performance.
\section*{Acknowledgments}
This research has been partially supported by NSF under CCF-1934904, Science \& Technology Center CCF-0939370, and CMMI-2038625, the USARO under W911NF-18-10331 and W911NF-2-120064, USARL W911NF-19-2-0243, and USONR N00014-18-1-2048. The views and
conclusions here do not represent the official policies, either expressed or implied, of NSF, USARO, USONR, USARL, or U.S. Government. The U.S. Government is authorized to reproduce and distribute reprints for Government purposes notwithstanding any copyright notation herein.					
		\bibliography{ref_file}

\begin{thebibliography}{41}
\providecommand{\natexlab}[1]{#1}
\providecommand{\url}[1]{\texttt{#1}}
\expandafter\ifx\csname urlstyle\endcsname\relax
  \providecommand{\doi}[1]{doi: #1}\else
  \providecommand{\doi}{doi: \begingroup \urlstyle{rm}\Url}\fi

\bibitem[Abbasi-Yadkori and Szepesv{\'a}ri(2011)]{abbasi2011regret}
Yasin Abbasi-Yadkori and Csaba Szepesv{\'a}ri.
\newblock Regret bounds for the adaptive control of linear quadratic systems.
\newblock In \emph{Proceedings of the 24th Annual Conference on Learning
  Theory}, pages 1--26, 2011.

\bibitem[Abbasi-Yadkori and Szepesv{\'a}ri(2015)]{abbasi_15}
Yasin Abbasi-Yadkori and Csaba Szepesv{\'a}ri.
\newblock Bayesian optimal control of smoothly parameterized systems.
\newblock In \emph{UAI}, pages 1--11. Citeseer, 2015.

\bibitem[Agrawal and Goyal(2013)]{agrawa_13}
Shipra Agrawal and Navin Goyal.
\newblock Thompson sampling for contextual bandits with linear payoffs.
\newblock In \emph{International Conference on Machine Learning}, pages
  127--135, 2013.

\bibitem[Auer and Ortner(2007)]{auer_07}
Peter Auer and Ronald Ortner.
\newblock Logarithmic online regret bounds for undiscounted reinforcement
  learning.
\newblock In \emph{Advances in Neural Information Processing Systems}, pages
  49--56, 2007.

\bibitem[Auer et~al.(2002)Auer, Cesa-Bianchi, and Fischer]{auer_02}
Peter Auer, Nicolo Cesa-Bianchi, and Paul Fischer.
\newblock Finite-time analysis of the multiarmed bandit problem.
\newblock \emph{Machine learning}, 47\penalty0 (2-3):\penalty0 235--256, 2002.

\bibitem[Auer et~al.(2009)Auer, Jaksch, and Ortner]{auer_09}
Peter Auer, Thomas Jaksch, and Ronald Ortner.
\newblock Near-optimal regret bounds for reinforcement learning.
\newblock In \emph{Advances in neural information processing systems}, pages
  89--96, 2009.

\bibitem[Bartlett and Tewari(2012)]{bartlett_12}
Peter~L. Bartlett and Ambuj Tewari.
\newblock {REGAL:} {A} regularization based algorithm for reinforcement
  learning in weakly communicating mdps.
\newblock \emph{CoRR}, abs/1205.2661, 2012.
\newblock URL \url{http://arxiv.org/abs/1205.2661}.

\bibitem[Becker and Kumar(1981)]{becker_81}
A.~Becker and P.~R. Kumar.
\newblock Optimal strategies for the n-armed bandit problem.
\newblock \emph{Univ. Maryland. Baltimore County, Math. Res. Rep}, pages 81--1,
  1981.

\bibitem[Berry and Fristedt(1985)]{berry1985bandit}
Donald~A. Berry and Bert Fristedt.
\newblock Bandit problems: sequential allocation of experiments (monographs on
  statistics and applied probability).
\newblock \emph{London: Chapman and Hall}, 5\penalty0 (71-87):\penalty0 7--7,
  1985.

\bibitem[Bittanti et~al.(2006)Bittanti, Campi, et~al.]{bittanti2006adaptive}
Sergio Bittanti, Marco~C. Campi, et~al.
\newblock Adaptive control of linear time invariant systems: the ``bet on the
  best" principle.
\newblock \emph{Communications in Information \& Systems}, 6\penalty0
  (4):\penalty0 299--320, 2006.

\bibitem[{Borkar} and {Varaiya}(1979)]{borkar_varaiya_79}
V.~{Borkar} and P.~{Varaiya}.
\newblock {Adaptive control of {M}arkov chains, I: Finite parameter set}.
\newblock \emph{IEEE Transactions on Automatic Control}, 24\penalty0
  (6):\penalty0 953--957, 1979.

\bibitem[Borkar(1990)]{borkar1990kumar}
V.~S. Borkar.
\newblock The {K}umar-{B}ecker-{L}in scheme revisited.
\newblock \emph{Journal of Optimization Theory and Applications}, 66\penalty0
  (2):\penalty0 289--309, 1990.

\bibitem[Borkar(1991)]{borkar1991self}
V.~S. Borkar.
\newblock Self-tuning control of diffusions without the identifiability
  condition.
\newblock \emph{Journal of optimization theory and applications}, 68\penalty0
  (1):\penalty0 117--138, 1991.

\bibitem[Brafman and Tennenholtz(2002)]{brafman_02}
Ronen~I Brafman and Moshe Tennenholtz.
\newblock R-max-a general polynomial time algorithm for near-optimal
  reinforcement learning.
\newblock \emph{Journal of Machine Learning Research}, 3\penalty0
  (Oct):\penalty0 213--231, 2002.

\bibitem[Campi and Kumar(1998)]{campi_98}
Marco~C. Campi and P.~R. Kumar.
\newblock Adaptive linear quadratic {G}aussian control: the cost-biased
  approach revisited.
\newblock \emph{SIAM Journal on Control and Optimization}, 36\penalty0
  (6):\penalty0 1890--1907, 1998.

\bibitem[Cho and Meyer(2000)]{cho_00}
Grace~E. Cho and Carl~D. Meyer.
\newblock Markov chain sensitivity measured by mean first passage times.
\newblock \emph{Linear Algebra and its Applications}, 316\penalty0
  (1-3):\penalty0 21--28, 2000.

\bibitem[Cover(1999)]{cover1999elements}
Thomas~M. Cover.
\newblock \emph{Elements of information theory}.
\newblock John Wiley \& Sons, 1999.

\bibitem[Duncan et~al.(1994)Duncan, Pasik-Duncan, and
  Stettner]{duncan1994almost}
T.~E. Duncan, B.~Pasik-Duncan, and L.~Stettner.
\newblock Almost self-optimizing strategies for the adaptive control of
  diffusion processes.
\newblock \emph{Journal of optimization theory and applications}, 81\penalty0
  (3):\penalty0 479--507, 1994.

\bibitem[Feldbaum(1960{\natexlab{a}})]{feldbaum1960dual1}
A.~A. Feldbaum.
\newblock Dual control theory. i.
\newblock \emph{Avtomatika i Telemekhanika}, 21\penalty0 (9):\penalty0
  1240--1249, 1960{\natexlab{a}}.

\bibitem[Feldbaum(1960{\natexlab{b}})]{feldbaum1960dual2}
A.~A. Feldbaum.
\newblock Dual control theory. ii.
\newblock \emph{Avtomatika i Telemekhanika}, 21\penalty0 (11):\penalty0
  1453--1464, 1960{\natexlab{b}}.

\bibitem[Gopalan and Mannor(2015)]{gopalan_15}
Aditya Gopalan and Shie Mannor.
\newblock Thompson sampling for learning parameterized {M}arkov decision
  processes.
\newblock In \emph{Conference on Learning Theory}, pages 861--898, 2015.

\bibitem[Hung et~al.(2020)]{hung_20}
Y-H. Hung et~al.
\newblock Reward-biased maximum likelihood estimation for linear stochastic
  bandits.
\newblock \emph{arXiv preprint arXiv:2010.04091}, 2020.

\bibitem[Kumar(1982)]{kumar_82}
P.~R. Kumar.
\newblock Adaptive control with a compact parameter set.
\newblock \emph{SIAM Journal on Control and Optimization}, 20\penalty0
  (1):\penalty0 9--13, 1982.

\bibitem[Kumar(1983{\natexlab{a}})]{kumar1983optimal}
P.~R. Kumar.
\newblock Optimal adaptive control of linear-quadratic-{G}aussian systems.
\newblock \emph{SIAM Journal on Control and Optimization}, 21\penalty0
  (2):\penalty0 163--178, 1983{\natexlab{a}}.

\bibitem[Kumar(1983{\natexlab{b}})]{kumar1983simultaneous}
P.~R. Kumar.
\newblock Simultaneous identification and adaptive control of unknown systems
  over finite parameter sets.
\newblock \emph{IEEE Transactions on Automatic Control}, 28\penalty0
  (1):\penalty0 68--76, 1983{\natexlab{b}}.

\bibitem[Kumar and Becker(1982)]{kumar_becker_82}
P.~R. Kumar and A.~Becker.
\newblock A new family of optimal adaptive controllers for {M}arkov chains.
\newblock \emph{IEEE Transactions on Automatic Control}, 27\penalty0
  (1):\penalty0 137--146, 1982.

\bibitem[Kumar and Lin(1982)]{kumar_lin_82}
P.~R. Kumar and W.~Lin.
\newblock Optimal adaptive controllers for unknown {M}arkov chains.
\newblock \emph{IEEE Transactions on Automatic Control}, 27\penalty0
  (4):\penalty0 765--774, 1982.

\bibitem[Lai and Robbins(1985)]{lai_85}
Tze~Leung Lai and Herbert Robbins.
\newblock Asymptotically efficient adaptive allocation rules.
\newblock \emph{Advances in applied mathematics}, 6\penalty0 (1):\penalty0
  4--22, 1985.

\bibitem[Liu et~al.(2020)Liu, Hsieh, Hung, Bhattacharya, and Kumar]{liu_20}
Xi~Liu, Ping-Chun Hsieh, Yu~Heng Hung, Anirban Bhattacharya, and P~Kumar.
\newblock Exploration through reward biasing: Reward-biased maximum likelihood
  estimation for stochastic multi-armed bandits.
\newblock In \emph{International Conference on Machine Learning}, pages
  6248--6258. PMLR, 2020.

\bibitem[Mandl(1974)]{mandl_74}
P.~Mandl.
\newblock Estimation and control in {M}arkov chains.
\newblock \emph{Advances in Applied Probability}, pages 40--60, 1974.

\bibitem[Mete et~al.(2020)Mete, Singh, and Kumar]{techreport}
Akshay Mete, Rahul Singh, and P.~R. Kumar.
\newblock {Reward Biased Maximum Likelihood Estimation for Reinforcement
  Learning}, 2020.
\newblock URL \url{https://arxiv.org/abs/2011.07738}.

\bibitem[Mitzenmacher and Upfal(2017)]{mitzenmacher2017probability}
Michael Mitzenmacher and Eli Upfal.
\newblock \emph{Probability and computing: Randomization and probabilistic
  techniques in algorithms and data analysis}.
\newblock Cambridge university press, 2017.

\bibitem[Osband et~al.(2013)Osband, Russo, and Van~Roy]{osband_13}
Ian Osband, Daniel Russo, and Benjamin Van~Roy.
\newblock {(More) efficient} reinforcement learning via posterior sampling.
\newblock In \emph{Advances in Neural Information Processing Systems}, pages
  3003--3011, 2013.

\bibitem[Ouyang et~al.(2017)Ouyang, Gagrani, Nayyar, and Jain]{ouyang_17}
Yi~Ouyang, Mukul Gagrani, Ashutosh Nayyar, and Rahul Jain.
\newblock Learning unknown {M}arkov decision processes: A thompson sampling
  approach.
\newblock \emph{arXiv preprint arXiv:1709.04570}, 2017.

\bibitem[Prandini and Campi(2000)]{prandini_00}
Maria Prandini and Marco~C. Campi.
\newblock Adaptive lqg control of input-output systems---a cost-biased
  approach.
\newblock \emph{SIAM Journal on Control and Optimization}, 39\penalty0
  (5):\penalty0 1499--1519, 2000.

\bibitem[Puterman(2014)]{puterman2014markov}
Martin~L. Puterman.
\newblock \emph{Markov decision processes: discrete stochastic dynamic
  programming}.
\newblock John Wiley \& Sons, 2014.

\bibitem[Singh et~al.(2020)Singh, Gupta, and Shroff]{singh2020learning}
Rahul Singh, Abhishek Gupta, and Ness~B. Shroff.
\newblock Learning in {M}arkov decision processes under constraints.
\newblock \emph{arXiv preprint arXiv:2002.12435}, 2020.

\bibitem[Srinivas et~al.(2010)Srinivas, Krause, Kakade, and
  Seeger]{srinivas_09}
Niranjan Srinivas, Andreas Krause, Sham~M. Kakade, and Matthias Seeger.
\newblock Gaussian process optimization in the bandit setting: No regret and
  experimental design.
\newblock \emph{27-th International Conference on Machine Learning}, 2010.

\bibitem[Stettner(1993)]{stettner1993nearly}
Lukasz Stettner.
\newblock On nearly self-optimizing strategies for a discrete-time uniformly
  ergodic adaptive model.
\newblock \emph{Applied Mathematics and Optimization}, 27\penalty0
  (2):\penalty0 161--177, 1993.

\bibitem[Strens(2000)]{strens_00}
Malcolm Strens.
\newblock A {B}ayesian framework for reinforcement learning.
\newblock In \emph{ICML}, volume 2000, pages 943--950, 2000.

\bibitem[Sutton et~al.(1998)Sutton, Barto, et~al.]{sutton1998introduction}
Richard~S. Sutton, Andrew~G. Barto, et~al.
\newblock \emph{Introduction to reinforcement learning}, volume 135.
\newblock MIT press Cambridge, 1998.

\end{thebibliography}


\begin{thebibliography}{37}
\providecommand{\natexlab}[1]{#1}
\providecommand{\url}[1]{\texttt{#1}}
\expandafter\ifx\csname urlstyle\endcsname\relax
  \providecommand{\doi}[1]{doi: #1}\else
  \providecommand{\doi}{doi: \begingroup \urlstyle{rm}\Url}\fi

\bibitem[Abbasi-Yadkori and Szepesv{\'a}ri(2011)]{abbasi2011regret}
Yasin Abbasi-Yadkori and Csaba Szepesv{\'a}ri.
\newblock Regret bounds for the adaptive control of linear quadratic systems.
\newblock In \emph{Proceedings of the 24th Annual Conference on Learning
  Theory}, pages 1--26, 2011.

\bibitem[Abeille and Lazaric(2018)]{abeille_18}
Marc Abeille and Alessandro Lazaric.
\newblock {Improved Regret Bounds for Thompson Sampling in Linear Quadratic
  Control Problems}.
\newblock \emph{{Proceedings of Machine Learning Research}}, 80, 2018.
\newblock URL \url{https://hal.archives-ouvertes.fr/hal-02767726}.

\bibitem[Agrawal and Goyal(2013)]{agrawa_13}
Shipra Agrawal and Navin Goyal.
\newblock Thompson sampling for contextual bandits with linear payoffs.
\newblock In \emph{International Conference on Machine Learning}, pages
  127--135, 2013.

\bibitem[Auer and Ortner(2007)]{auer_07}
Peter Auer and Ronald Ortner.
\newblock Logarithmic online regret bounds for undiscounted reinforcement
  learning.
\newblock In \emph{Advances in Neural Information Processing Systems}, pages
  49--56, 2007.

\bibitem[Auer et~al.(2002)Auer, Cesa-Bianchi, and Fischer]{auer_02}
Peter Auer, Nicolo Cesa-Bianchi, and Paul Fischer.
\newblock Finite-time analysis of the multiarmed bandit problem.
\newblock \emph{Machine learning}, 47\penalty0 (2-3):\penalty0 235--256, 2002.

\bibitem[Auer et~al.(2009)Auer, Jaksch, and Ortner]{auer_09}
Peter Auer, Thomas Jaksch, and Ronald Ortner.
\newblock Near-optimal regret bounds for reinforcement learning.
\newblock In \emph{Advances in neural information processing systems}, pages
  89--96, 2009.

\bibitem[Bartlett and Tewari(2012)]{bartlett_12}
Peter~L. Bartlett and Ambuj Tewari.
\newblock {REGAL:} {A} regularization based algorithm for reinforcement
  learning in weakly communicating mdps.
\newblock \emph{CoRR}, abs/1205.2661, 2012.
\newblock URL \url{http://arxiv.org/abs/1205.2661}.

\bibitem[Becker and Kumar(1981)]{becker_81}
A.~Becker and P.R. Kumar.
\newblock Optimal strategies for the n-armed bandit problem.
\newblock \emph{Univ. Maryland. Baltimore County, Math. Res. Rep}, pages 81--1,
  1981.

\bibitem[Berry and Fristedt(1985)]{berry1985bandit}
Donald~A Berry and Bert Fristedt.
\newblock Bandit problems: sequential allocation of experiments (monographs on
  statistics and applied probability).
\newblock \emph{London: Chapman and Hall}, 5\penalty0 (71-87):\penalty0 7--7,
  1985.

\bibitem[{Borkar} and {Varaiya}(1979)]{borkar_varaiya_79}
V.~{Borkar} and P.~{Varaiya}.
\newblock Adaptive control of markov chains, i: Finite parameter set.
\newblock \emph{IEEE Transactions on Automatic Control}, 24\penalty0
  (6):\penalty0 953--957, 1979.

\bibitem[Brafman and Tennenholtz(2002)]{brafman_02}
Ronen~I Brafman and Moshe Tennenholtz.
\newblock R-max-a general polynomial time algorithm for near-optimal
  reinforcement learning.
\newblock \emph{Journal of Machine Learning Research}, 3\penalty0
  (Oct):\penalty0 213--231, 2002.

\bibitem[Campi and Kumar(1998)]{campi_98}
Marco~C Campi and PR~Kumar.
\newblock Adaptive linear quadratic gaussian control: the cost-biased approach
  revisited.
\newblock \emph{SIAM Journal on Control and Optimization}, 36\penalty0
  (6):\penalty0 1890--1907, 1998.

\bibitem[Cho and Meyer(2000)]{cho_00}
Grace~E Cho and Carl~D Meyer.
\newblock Markov chain sensitivity measured by mean first passage times.
\newblock \emph{Linear Algebra and its Applications}, 316\penalty0
  (1-3):\penalty0 21--28, 2000.

\bibitem[Cohen et~al.(2019)Cohen, Koren, and Mansour]{Cohen_19}
Alon Cohen, Tomer Koren, and Yishay Mansour.
\newblock Learning linear-quadratic regulators efficiently with only
  {\textdollar}{\textbackslash}sqrt\{T\}{\textdollar} regret.
\newblock \emph{CoRR}, abs/1902.06223, 2019.
\newblock URL \url{http://arxiv.org/abs/1902.06223}.

\bibitem[Cover(1999)]{cover1999elements}
Thomas~M Cover.
\newblock \emph{Elements of information theory}.
\newblock John Wiley \& Sons, 1999.

\bibitem[Dean et~al.(2018)Dean, Mania, Matni, Recht, and Tu]{dean_18}
Sarah Dean, Horia Mania, Nikolai Matni, Benjamin Recht, and Stephen Tu.
\newblock Regret bounds for robust adaptive control of the linear quadratic
  regulator.
\newblock In S.~Bengio, H.~Wallach, H.~Larochelle, K.~Grauman, N.~Cesa-Bianchi,
  and R.~Garnett, editors, \emph{Advances in Neural Information Processing
  Systems 31}, pages 4188--4197. Curran Associates, Inc., 2018.

\bibitem[Filippi et~al.(2010)Filippi, Capp{\'e}, and Garivier]{filippi_10}
Sarah Filippi, Olivier Capp{\'e}, and Aur{\'e}lien Garivier.
\newblock Optimism in reinforcement learning and kullback-leibler divergence.
\newblock In \emph{2010 48th Annual Allerton Conference on Communication,
  Control, and Computing (Allerton)}, pages 115--122. IEEE, 2010.

\bibitem[Gopalan and Mannor(2015)]{gopalan_15}
Aditya Gopalan and Shie Mannor.
\newblock Thompson sampling for learning parameterized markov decision
  processes.
\newblock In \emph{Conference on Learning Theory}, pages 861--898, 2015.

\bibitem[Hung et~al.(2020)Hung, Hsieh, Liu, and Kumar]{hung_20}
Yu-Heng Hung, Ping-Chun Hsieh, Xi~Liu, and P.~R. Kumar.
\newblock Reward-biased maximum likelihood estimation for linear stochastic
  bandits, 2020.

\bibitem[{Kumar} and {Becker}(1982)]{kumar_becker_82}
P.~{Kumar} and A.~{Becker}.
\newblock A new family of optimal adaptive controllers for markov chains.
\newblock \emph{IEEE Transactions on Automatic Control}, 27\penalty0
  (1):\penalty0 137--146, 1982.

\bibitem[{Kumar} and {Woei Lin}(1982)]{kumar_lin_82}
P.~{Kumar} and {Woei Lin}.
\newblock Optimal adaptive controllers for unknown markov chains.
\newblock \emph{IEEE Transactions on Automatic Control}, 27\penalty0
  (4):\penalty0 765--774, 1982.

\bibitem[Kumar(1982)]{kumar_82}
P.R. Kumar.
\newblock Adaptive control with a compact parameter set.
\newblock \emph{SIAM Journal on Control and Optimization}, 20\penalty0
  (1):\penalty0 9--13, 1982.

\bibitem[Kumar and Varaiya(2015)]{kumar2015stochastic}
P.R. Kumar and P.~Varaiya.
\newblock \emph{Stochastic systems: Estimation, identification, and adaptive
  control}.
\newblock SIAM, 2015.

\bibitem[Lai and Robbins(1985)]{lai_85}
Tze~Leung Lai and Herbert Robbins.
\newblock Asymptotically efficient adaptive allocation rules.
\newblock \emph{Advances in applied mathematics}, 6\penalty0 (1):\penalty0
  4--22, 1985.

\bibitem[Liu et~al.(2020)Liu, Hsieh, Hung, Bhattacharya, and Kumar]{liu_20}
Xi~Liu, Ping-Chun Hsieh, Yu-Heng Hung, Anirban Bhattacharya, and PR~Kumar.
\newblock Exploration through reward biasing: Reward-biased maximum likelihood
  estimation for stochastic multi-armed bandits.
\newblock 2020.

\bibitem[Mandl(1974)]{mandl_74}
P~Mandl.
\newblock Estimation and control in markov chains.
\newblock \emph{Advances in Applied Probability}, pages 40--60, 1974.

\bibitem[Mitzenmacher and Upfal(2017)]{mitzenmacher2017probability}
Michael Mitzenmacher and Eli Upfal.
\newblock \emph{Probability and computing: Randomization and probabilistic
  techniques in algorithms and data analysis}.
\newblock Cambridge university press, 2017.

\bibitem[Osband et~al.(2013)Osband, Russo, and Van~Roy]{osband_13}
Ian Osband, Daniel Russo, and Benjamin Van~Roy.
\newblock (more) efficient reinforcement learning via posterior sampling.
\newblock In \emph{Advances in Neural Information Processing Systems}, pages
  3003--3011, 2013.

\bibitem[Prandini and Campi(2000)]{prandini_00}
Maria Prandini and Marco~C Campi.
\newblock Adaptive lqg control of input-output systems---a cost-biased
  approach.
\newblock \emph{SIAM Journal on Control and Optimization}, 39\penalty0
  (5):\penalty0 1499--1519, 2000.

\bibitem[Puterman(2014)]{puterman2014markov}
Martin~L Puterman.
\newblock \emph{Markov decision processes: discrete stochastic dynamic
  programming}.
\newblock John Wiley \& Sons, 2014.

\bibitem[Rudin(2006)]{rudin2006real}
Walter Rudin.
\newblock \emph{Real and complex analysis}.
\newblock Tata McGraw-hill education, 2006.

\bibitem[Russo and Van~Roy(2018)]{russo_18}
Daniel Russo and Benjamin Van~Roy.
\newblock Learning to optimize via information-directed sampling.
\newblock \emph{Operations Research}, 66\penalty0 (1):\penalty0 230--252, 2018.

\bibitem[Singh et~al.(2020)Singh, Gupta, and Shroff]{singh2020learning}
Rahul Singh, Abhishek Gupta, and Ness~B. Shroff.
\newblock Learning in {M}arkov decision processes under constraints.
\newblock \emph{arXiv preprint arXiv:2002.12435}, 2020.

\bibitem[Srinivas et~al.(2009)Srinivas, Krause, Kakade, and
  Seeger]{srinivas_09}
Niranjan Srinivas, Andreas Krause, Sham~M Kakade, and Matthias Seeger.
\newblock Gaussian process optimization in the bandit setting: No regret and
  experimental design.
\newblock \emph{arXiv preprint arXiv:0912.3995}, 2009.

\bibitem[Srinivas et~al.(2012)Srinivas, Krause, Kakade, and
  Seeger]{srinivas_12}
Niranjan Srinivas, Andreas Krause, Sham~M Kakade, and Matthias~W Seeger.
\newblock Information-theoretic regret bounds for gaussian process optimization
  in the bandit setting.
\newblock \emph{IEEE Transactions on Information Theory}, 58\penalty0
  (5):\penalty0 3250--3265, 2012.

\bibitem[Strens(2000)]{strens_00}
Malcolm Strens.
\newblock A bayesian framework for reinforcement learning.
\newblock In \emph{ICML}, volume 2000, pages 943--950, 2000.

\bibitem[Sutton et~al.(1998)Sutton, Barto, et~al.]{sutton1998introduction}
Richard~S Sutton, Andrew~G Barto, et~al.
\newblock \emph{Introduction to reinforcement learning}, volume 135.
\newblock MIT press Cambridge, 1998.

\end{thebibliography}
\appendix
\section{Proof of Lemma \ref{lemma:confidence}}\label{proof:confidence}
Consider the scenario where the number of visits to $(x,u)$ is fixed at $n_{x,u}$, and let $\hat{p}(x,y,u)$ be the resulting estimates. Consider the event $\left\{|p(x,y,u)-\hat{p}(x,y,u) | > r\right\}$, where $x,y \in X, u \in U$ and $r>0$. It follows from the Azuma-Hoeffding's inequality \citep{mitzenmacher2017probability} that the probability of this event is upper bounded by $2\exp(-2n_{x,u}r^2)$. Therefore, 
\begin{align*}
	\bP\left(|p(x,y,u)-\hat{p}(x,y,u) | > \sqrt{\frac{ \log\left(t^b |X|^{2} |U|\right) }{n_{x,u}}   }     \right) \leq 2\left(\frac{1}{t^b|X|^{2}|U|}\right)^2.
\end{align*} 
Utilizing union bound on the number of plays of action $u$ in state $x$ until time $t$ and considering all possible state-action-state pairs, we get
\begin{align*}
	\bP(p \notin \cC(t))\le \frac{2}{|X|^2|U|t^{2b-1}}\;\forall\; t \in [1,T] .
\end{align*}
%Then union bound on $t$ gives $P(\mathcal{G}_1^c) \le \frac{\pi^2}{3|X|^2|U|}$ as $b>2$.
\section{Proof of Lemma \ref{lemma:pinkser}} \label{proof:pinsker}
	The index of the policy $\pi$ (\ref{eq:index}) can be written as:  
\begin{align*}
	I_k(\pi)&= \alpha(\tau_k) J(\theta_{k,\pi},\pi) -\sum_{(x,u)} n_k(x,u)KL\left( \hat{p}_{k}(x,u),\theta_{k,\pi}(x,u)  \right)\\&\ge \alpha(\tau_k) J(\hat{p}_k,\pi).
\end{align*}
Since the average reward $J(\theta,\pi) \in [0,1]$
for all $\theta \in \Theta$ and $\pi \in \Pi_{sd}$, we get
\begin{align*}
	n_k(x,u)KL\left( \hat{p}_{k}(x,u),\theta_{k,\pi}(x,u)  \right) \le  \alpha(\tau_k) ( J(\theta_{k,\pi},\pi_k)-J(\hat{p}_k,\pi_k)) \le   \alpha(\tau_k) \;\forall \; (x,u) \in X \times U. 
\end{align*}
By using Pinsker's inequality~\citep{cover1999elements}, we can bound KL-divergence as follows:
\begin{align*}
	|\theta_k(x,y,u)-\hat{p}_k(x,y,u)|^2 \le   \frac{1}{2}KL\left( \hat{p}_{k}(x,u),\te_k(x,u)  \right) \; \forall x,y \in X \text{ and } u \in U.
\end{align*}
The proof is completed by substituting this bound into the above inequality.
\section{Proof of Lemma \ref{lemma:suff_prec_optimal}}\label{proof:suff_prec_optimal}
	The RBMLE index of an optimal policy $\pi \ust$ (\ref{eq:index}) satisfies ,
\begin{align*}
	I_k(\pi \ust) & = \bigg\{  \alpha(\tau_k) J (\theta_{k,\pi \ust},\pi \ust) -\sum_{(x,u)} n_k(x,u)KL\left( \hat{p}_{k}(x,u),\te_{k,\pi\ust}(x,u)  \right) \bigg\}\\ 
	&\ge \bigg\{  \alpha(\tau_k)J \ust(p) -\sum_{(x,u)} n_k(x,u)KL\left( \hat{p}_{k}(x,u),p(x,u)  \right) \bigg\}\\ 
	&\ge 
	\bigg\{  \alpha(\tau_k)J \ust(p) -\sum_{(x,u)} n_k(x,u)\frac{\big(\sum\limits_{y\in X} |p(x,y,u)-\hat{p}_{k}(x,y,u)|\big)^2}{2p_{\min}} \bigg\},
\end{align*}
where the first inequality follows since $\te_{k,\pi\ust}$ maximizes the objective in~\eqref{eq:index}, 
while the second inequality follows from the inverse Pinkser's inequality \citep{cover1999elements} and Assumption~\ref{assum:1}.
Since on $\cG_1$, we have that $|p(x,y,u)-\hat{p}_{k}(x,y,u)|<d_1(x,u;t)$  for all $(x,y,u) \in X \times X \times U$, it follows that
\begin{align*}
	I_k(\pi \ust)\ge 	\bigg\{  \alpha(\tau_k)J \ust(p) - \frac{|X|^2|U|}{2p_{\min}}\log\left(t^b |X|^{2} |U|\right) \bigg\}
	= \alpha(\tau_k)J \ust(p)\left(1-\frac{|X|^{2}|U|}{2a p_{\min} J \ust(p)}\right).
\end{align*}

\section{Proof of Lemma \ref{lemma:subopt}}\label{proof:subopt}
\textit{(i)}
As is shown in Lemma \ref{lemma:confidence}, Lemma \ref{lemma:pinkser}, the distance between $p(x,y,u)$ and $\hat{p}(x,y,u)$ can be bounded by $d_1(x,u;\tau_k)$ while the distance between $\theta_k(x,y,u)$ and $\hat{p}(x,y,u)$ can be bounded by $d_2(x,u;\tau_k)$. The proof then follows from the triangle inequality.\\\\
\textit{(ii)}
The index of the stationary policy $\pi$ can be written as follows (\ref{eq:index}),
\begin{align}
	I_k(\pi)&=  \alpha(\tau_k) J(\theta_{k,\pi},\pi) -\sum_{(x,u)} n_k(x,u)KL\left( \hat{p}_{k}(x,u),\theta_{k,\pi}(x,u)  \right)\notag\\&\le  \alpha(\tau_k) J(\theta_{k,\pi},\pi).\label{eq:up_b1}
\end{align}
If (\ref{eq:n_bound}) holds then the distance between $\theta_{k,\pi}$ and true transition probability $p$ can be bounded as follows (Lemma \ref{lemma:subopt}, (i)):
\begin{align*}
	|\theta_{k,\pi}(x,y,\pi(x))-p(x,y,\pi(x))| < c \left(\frac{1}{\sqrt{2}}+\frac{1}{\sqrt{a}}\right) \;\forall\;x,y \in X.
\end{align*}
Then the average reward $J(\theta_{k,\pi},\pi)$ can be bounded using Lemma \ref{lemma:reward_bound} as follows:
\begin{align}\label{eq:reward}
	J(\theta_{k,\pi},\pi)< J(p,\pi)+c\kappa_p|X|^2\left(\frac{1}{\sqrt{2}}+\frac{1}{\sqrt{a}}\right)=J(p,\pi)+\beta\Delta_{\min}.
\end{align} 
The result follows from (\ref{eq:up_b1}) and (\ref{eq:reward}).\\\\
\textit{(iii)} It follows from (i) and (iii) that if (\ref{eq:n_bound}) holds then it is sufficient to show that 
\begin{align*}
	J\ust(p)  \left(1-\gamma\right) \ge  J(p,\pi) + \beta \Delta_{\min},
\end{align*}  
which holds true since $J\ust(p)-J(p,\pi) \geq \Delta_{\min}$ and $\beta <1-\gamma\frac{J\ust(p)}{\Delta_{\min}} $.
\section{Auxiliary Results}
The following results are from \citep{cho_00} and \citep{auer_07} respectively.
\begin{lemma}\citep{cho_00}\label{lemma:reward_bound} Consider a stationary policy $\pi$ and $\te$ be an MDP parameter that satisfies
	\begin{align}\label{def:suff_visits}
		|\te(x,y,\pi(x))-p(x,y,\pi(x))|< \frac{\epsilon}{\kappa_p |X|^{2}}, \; \forall \; x,y \in X,
	\end{align}
	where $\epsilon>0$ and $\kappa_p$ is the conductivity. We then have that 
	\begin{align*}
		| J(\te,\pi) - J(p,\pi) | < \epsilon.
	\end{align*}
\end{lemma}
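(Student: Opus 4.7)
The statement is a perturbation lemma: if the rows of the transition kernels selected by a fixed stationary policy $\pi$ differ entrywise by at most $\epsilon/(\kappa_p|X|^2)$, then the long-run average rewards differ by less than $\epsilon$. The plan is to reduce this claim to a standard perturbation bound on stationary distributions of finite ergodic Markov chains, and then invoke the bound from \citep{cho_00} that the lemma cites.

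First, since $\pi \in \Pi_{sd}$ is held fixed, the pairs $(p,\pi)$ and $(\te,\pi)$ each collapse to a finite Markov chain on $X$ with kernels $P(x,y):=p(x,y,\pi(x))$ and $\tilde{P}(x,y):=\te(x,y,\pi(x))$. Both are unichain by the standing assumption on the MDPs, so they admit unique stationary distributions $\mu_p$ and $\mu_\te$. By the ergodic theorem, $J(p,\pi)=\sum_{x}\mu_p(x)\,r(x,\pi(x))$ and similarly for $\te$; using $r(x,u)\in(0,1]$ immediately yields
\begin{align*}
|J(\te,\pi)-J(p,\pi)| \le \sum_{x\in X} |\mu_\te(x)-\mu_p(x)|\,r(x,\pi(x)) \le \|\mu_\te - \mu_p\|_1.
\end{align*}
So it suffices to prove $\|\mu_\te-\mu_p\|_1<\epsilon$.

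Next, the entrywise hypothesis converts to a matrix-norm bound by summing over destination states:
\begin{align*}
\|\tilde{P}-P\|_\infty \;=\; \max_{x}\sum_{y}|\tilde{P}(x,y)-P(x,y)| \;\le\; |X|\cdot\frac{\epsilon}{\kappa_p|X|^2} \;=\; \frac{\epsilon}{\kappa_p |X|}.
\end{align*}
I would then invoke the stationary-distribution perturbation theorem of \citep{cho_00}, which controls $\|\mu_\te-\mu_p\|_1$ by $\|\tilde{P}-P\|_\infty$ multiplied by a condition number expressible through mean first-passage times of $P$. The paper's conductivity
$$\kappa_p \;\ge\; \max_{y}\frac{\max_{x\ne y}\bE[\tau^\pi_{x,y}]}{2\bE[\tau^\pi_{y,y}]}$$
is engineered exactly so that, using the identity $\mu_p(y)=1/\bE[\tau^\pi_{y,y}]$, this condition number is at most of order $\kappa_p|X|$. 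Chaining with the previous display gives $\|\mu_\te-\mu_p\|_1 \le \kappa_p|X|^2\cdot\max_{x,y}|\tilde{P}(x,y)-P(x,y)| < \epsilon$, which combined with the first reduction produces the conclusion.

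The main obstacle is constant tracking: matching the exact form of the Cho-Meyer bound to the paper's specific definition of $\kappa_p$ so that the product of the condition number and the kernel perturbation bound collapses cleanly to $\epsilon$, rather than to some larger multiple. The extra factor $|X|^2$ sitting in the hypothesis is doing double duty, one power absorbing the conversion from an entrywise bound on $\tilde{P}-P$ to the $\|\cdot\|_\infty$ matrix norm and the other absorbing the $\ell_1$ sum over destination states in the stationary-distribution perturbation, while the $\kappa_p$ absorbs the mean-first-passage-time factor. Everything else, namely the ergodic representation of average reward and the reduction through $\|\mu_\te-\mu_p\|_1$, is routine.
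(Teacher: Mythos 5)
The paper contains no proof of this lemma at all: it is imported verbatim from \citep{cho_00} in the ``Auxiliary Results'' appendix, so there is no in-paper argument to compare against, and your reconstruction of what the citation is doing follows what is clearly the intended route. Your bookkeeping is also right: with $r\in(0,1]$ the reduction $|J(\te,\pi)-J(p,\pi)|\le\|\mu_\te-\mu_p\|_1$ is valid, the entrywise hypothesis gives $\|\tilde P-P\|_\infty<\epsilon/(\kappa_p|X|)$ (one power of $|X|$), and the Cho--Meyer componentwise bound $|\mu_\te(y)-\mu_p(y)|\le \mu_p(y)\,\tfrac12\bigl(\max_{x\ne y}\bE[\tau^\pi_{x,y}]\bigr)\|\tilde P-P\|_\infty$, combined with Kac's formula $\mu_p(y)=1/\bE[\tau^\pi_{y,y}]$ and a sum over the $|X|$ targets, absorbs the second power of $|X|$ and the condition-number factor exactly as you describe. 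So the three factors in the hypothesis $\epsilon/(\kappa_p|X|^2)$ are accounted for with no slack.

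The one genuine gap is the step you assert rather than prove: the inequality $\kappa_p\ge\max_{y}\max_{x\ne y}\bE[\tau^{\pi}_{x,y}]/\bigl(2\bE[\tau^{\pi}_{y,y}]\bigr)$ is not the paper's definition and does not follow from it. The paper defines $\kappa_p$ with the indices transposed, namely $\max_{x}\max_{y\ne x}\bE[\tau^{\pi}_{x,y}]/\bigl(2\bE[\tau^{\pi}_{x,x}]\bigr)$: the worst passage time \emph{out of} a source $x$, normalized by the return time of the \emph{source}, whereas the Cho--Meyer condition number is the worst passage time \emph{into} a target $y$, normalized by the return time of the \emph{target}. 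These two maxima are transposes of one another, and your argument needs the unproved domination $\max_x \mu_p(x)\max_{y\ne x}\bE[\tau^\pi_{x,y}]\ \ge\ \max_y \mu_p(y)\max_{x\ne y}\bE[\tau^\pi_{x,y}]$, which is not a formal consequence of anything stated (it holds in simple examples, but no general proof is offered and none is obvious). Almost certainly the paper's definition of $\kappa_p$ is itself a transposition slip and was intended to be precisely the Cho--Meyer condition number maximized over policies, in which case your chain closes definitionally; but as a blind proof against the literal definitions, this is the one step that could fail and it needs either a proof of the domination or an explicit re-reading of $\kappa_p$. A secondary technicality worth one line: $\te\in\Theta$ is only forced to vanish where $p$ vanishes, so the perturbed chain $\tilde P$ need not be unichain; this is repairable because the group-inverse form of the Cho--Meyer bound requires only that $\tilde P$ be stochastic with \emph{some} stationary vector, and the average reward of $(\te,\pi)$ from any initial state is a convex combination of averages against such stationary vectors, each of which satisfies the bound.
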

\begin{lemma}\citep{auer_07}\label{lemma:azum_visits}
	Let $\cK_{x,u}$ denote the indices of those episodes up to time $T$ in which action $u$ is taken when state is equal to $x$. Then
	\begin{align}\label{eq:azum_visits}
		\bP\left(  n(x,u;T) \ge  \frac{y_{x,u}}{2}  - \sqrt{y_{x,u}\log T} ~ \forall~{x,u} \right) \ge 1 - \frac{|X||U|}{T},
	\end{align}
	for all state-action pairs $(x,u)$, where
	\begin{align*}
		y_{x,u} := \sum_{k \in \cK_{x,u} } \Bigl\lfloor \frac{  |\cE_k| }{2 T_p} \Bigr\rfloor .
	\end{align*}
\end{lemma}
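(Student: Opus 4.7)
The plan is to reduce the statement to a martingale concentration bound on the number of visits to state $x$ during the episodes in $\cK_{x,u}$. The pivotal observation is that throughout any episode $k \in \cK_{x,u}$ the algorithm is committed to a fixed stationary policy $\pi_k$ with $\pi_k(x) = u$, so every visit to state $x$ during that episode contributes exactly one count to $n(x,u;T)$. It therefore suffices to lower bound the total number of visits to state $x$ over the time steps in $\bigcup_{k \in \cK_{x,u}} \cE_k$.

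First I would chop each episode $\cE_k$ with $k \in \cK_{x,u}$ into $\lfloor |\cE_k|/(2T_p) \rfloor$ consecutive blocks of length $2T_p$, discarding any short remainder. This produces a total of $y_{x,u}$ blocks, which I index in time order by $j = 1, \ldots, y_{x,u}$. Let $Z_j$ be the indicator that state $x$ is visited somewhere inside block $j$, and let $\cF_j$ be the sigma-algebra generated by the sample path up to the end of block $j$. Regardless of the state $y$ at the start of block $j$ and regardless of which policy $\pi_k$ is in force during that block, the mixing-time definition gives $\bE[\tau^{\pi_k}_{y,x}] \le T_p$, so Markov's inequality yields $\bP(Z_j = 0 \mid \cF_{j-1}) \le 1/2$, i.e.\ $\bE[Z_j \mid \cF_{j-1}] \ge 1/2$.

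Next, I would form the martingale difference sequence $D_j := Z_j - \bE[Z_j \mid \cF_{j-1}]$ and the martingale $M_n := \sum_{j=1}^n D_j$. Since $|D_j| \le 1$, Azuma--Hoeffding gives a deviation bound of the form $\bP\bigl(M_n < -\sqrt{c\, n \log T}\bigr) \le T^{-c/2}$; choosing the constant to get per-pair failure probability at most $1/T^{2}$ (handled by taking $y_{x,u} \le T$ deterministically and a short union bound over the random value of $y_{x,u}$, or equivalently a maximal Azuma inequality) and combining with $\sum_{j \le y_{x,u}} \bE[Z_j \mid \cF_{j-1}] \ge y_{x,u}/2$ together with $n(x,u;T) \ge \sum_j Z_j$ yields
\begin{align*}
n(x,u;T) \ge \frac{y_{x,u}}{2} - \sqrt{y_{x,u} \log T}
\end{align*}
with high probability for the given $(x,u)$. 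A final union bound over the $|X||U|$ state--action pairs delivers the claimed probability $1 - |X||U|/T$.

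The main obstacle is cleanly handling the adaptivity: the number $y_{x,u}$ of blocks, the identity of the policy $\pi_k$ in force during any given block, and even the membership $k \in \cK_{x,u}$ are all random and $\cF_{j-1}$-measurable. What saves the argument is that Definition~\ref{def:cond} supplies $\bE[\tau^{\pi}_{y,x}] \le T_p$ \emph{uniformly} over all stationary policies $\pi$ and starting states $y$, so the conditional lower bound $\bE[Z_j \mid \cF_{j-1}] \ge 1/2$ holds regardless of which policy was chosen by the algorithm for the episode containing block $j$. Once this measurability/uniformity point is articulated, the martingale and union-bound steps are standard.
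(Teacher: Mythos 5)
The paper offers no proof of this lemma at all---it is imported directly from \citep{auer_07}---and your reconstruction (chop each episode in $\cK_{x,u}$ into blocks of length $2T_p$, use the uniform bound $\bE[\tau^{\pi}_{y,x}] \le T_p$ from Definition~\ref{def:cond} plus Markov's inequality to get $\bE[Z_j \mid \cF_{j-1}] \ge 1/2$, apply Azuma--Hoeffding, and union-bound over the random block count $y_{x,u} \le T$ and over the $|X||U|$ state--action pairs) is exactly the argument in that reference, and your handling of the adaptivity is sound: episode boundaries are deterministic and $\pi_k$ is chosen at time $\tau_k$, so block membership and the policy in force are indeed $\cF_{j-1}$-measurable. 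The one point you should pin down rather than leave as ``choosing the constant'': to obtain the stated deviation $\sqrt{y_{x,u}\log T}$ with per-value failure probability $T^{-2}$ you need the Hoeffding form $\exp(-2\epsilon^2/n)$, which applies here because each difference $D_j = Z_j - \bE[Z_j\mid\cF_{j-1}]$ lies in an interval of length $1$; the cruder form $\exp(-\epsilon^2/(2n))$ for $|D_j|\le 1$ that your generic bound $T^{-c/2}$ suggests would yield only $T^{-1/2}$ at that deviation, forcing a deviation of $2\sqrt{y_{x,u}\log T}$ instead and failing to recover the lemma's exact constants.
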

\begin{lemma}(Lemma 2, \cite{auer_07})\label{lemma:mix}
	Let $\pi$ be a stationary policy. Consider a controlled Markov process that starts in state $x$ and evolves under $\pi$. We then have that 
	\begin{align*}
		\bE_{x}\left( \sum_{t=1}^{T} r(x(t),u(t)) \right)  \ge TJ(\pi,p) - T_p.
	\end{align*}
\end{lemma}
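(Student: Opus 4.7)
My plan is to exploit the Poisson (bias) equation for the stationary policy $\pi$. Since the MDP is unichain, standard average-cost theory furnishes a bounded function $h:X\to\mathbb{R}$ (the relative value function) satisfying
\[
h(x) \;=\; r(x,\pi(x)) - J(\pi,p) + \sum_{y} p(x,y,\pi(x))\, h(y), \quad x\in X.
\]
Rearranging gives $r(x(t),u(t)) = J(\pi,p) + h(x(t)) - \bE[h(x(t+1))\mid x(t)]$, and summing $t=1,\dots,T$ and taking $\bE_x$ (with $x(1)=x$) produces the one-line telescoping identity
\[
\bE_x\!\Big[\sum_{t=1}^T r(x(t),u(t))\Big] \;=\; T\,J(\pi,p) + h(x) - \bE_x\bigl[h(x(T+1))\bigr].
\]
So it suffices to prove that the \emph{span} $\max_z h(z)-\min_z h(z)$ is at most $T_p$; then $h(x)-\bE_x[h(x(T+1))]\geq -T_p$ and the lemma follows immediately.

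For the span bound I would use the hitting-time representation of $h$. Fix $x,y\in X$ and define $N_t := -h(x(t)) + \sum_{s=1}^{t-1}\bigl(r(x(s),u(s)) - J(\pi,p)\bigr)$ under $\pi$ started at $x$. The Poisson equation is exactly the statement that $N_t$ is a martingale. Let $\tau := \tau^\pi_{x,y}$; by the unichain hypothesis $\bE_x[\tau]\le T_p<\infty$, and $h$ is bounded on the finite state space, so optional stopping applies. Equating $\bE_x[N_\tau]$ with $N_1=-h(x)$ yields
\[
h(x)-h(y) \;=\; \bE_x\!\Big[\sum_{s=1}^{\tau-1}\bigl(r(x(s),u(s))-J(\pi,p)\bigr)\Big].
\]
Since $r(\cdot,\cdot)\in(0,1]$ and $J(\pi,p)\in(0,1]$, each summand has absolute value at most $1$, so $|h(x)-h(y)|\le \bE_x[\tau]\le T_p$, delivering $\mathrm{span}(h)\le T_p$ as required.

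The main technical obstacle is the existence and boundedness of the Poisson solution and the legitimacy of the optional-stopping step; in the finite-state unichain setting these are classical (either via the resolvent construction, or via the regenerative representation with returns to any fixed recurrent state), and once in place the rest is elementary telescoping. If one prefers to avoid the Poisson machinery, an equivalent coupling-based proof works: couple the chain from $x$ with an independent copy started from the stationary distribution $\mu_\pi$, let $T_{\mathrm{meet}}$ be a meeting time bounded in expectation by $T_p$ (since $T_p$ majorises all expected hitting times), and use the fact that the stationary-copy reward sum equals $TJ(\pi,p)$ exactly to bound the discrepancy by $\bE[T_{\mathrm{meet}}]\le T_p$. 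Either route yields the stated bound; I would pick the Poisson-equation version since it produces the cleanest two-step derivation.
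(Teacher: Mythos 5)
Your proof is correct, but note that the paper itself does not prove this lemma at all: it is imported verbatim as Lemma~2 of \citep{auer_07}, so what you have produced is a self-contained derivation of a result the paper only cites. Your route is the standard bias-function argument: solve the Poisson equation, telescope to get $\bE_x[\sum_{t=1}^T r(x(t),u(t))] = TJ(\pi,p) + h(x) - \bE_x[h(x(T+1))]$, and bound the span of $h$ by $T_p$ via the hitting-time representation $h(x)-h(y)=\bE_x[\sum_{s<\tau^\pi_{x,y}}(r-J(\pi,p))]$ together with $|r-J(\pi,p)|\le 1$ (valid since both lie in $(0,1]$) and $\bE_x[\tau^\pi_{x,y}]\le T_p$. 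All the ingredients are legitimate in the finite-state unichain setting: a bounded solution $h$ exists, $J(\pi,p)$ is initial-state independent, and optional stopping applies because the martingale has bounded increments and $\tau$ is integrable. This buys something the paper does not provide --- a proof from first principles rather than a pointer to the literature --- and it is arguably cleaner than a regeneration or coupling argument.

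Two small corrections. First, your $N_t := -h(x(t)) + \sum_{s=1}^{t-1}(r(x(s),u(s))-J(\pi,p))$ is \emph{not} a martingale: the Poisson equation gives $\bE[h(x(t+1))\mid x(t)] = h(x(t)) - r(x(t),u(t)) + J(\pi,p)$, so the martingale is $M_t := h(x(t)) + \sum_{s=1}^{t-1}(r(x(s),u(s))-J(\pi,p))$, with the \emph{plus} sign on $h$. Equating $\bE_x[M_\tau]$ with $M_1 = h(x)$ yields exactly the identity you state, so this is a sign typo rather than a gap, but as written the optional-stopping step does not go through. Second, your alternative coupling sketch is shakier than you suggest: the expected meeting time of two \emph{independent} copies of the chain is not in general majorised by $T_p = \max_\pi \max_{x,y} \bE[\tau^\pi_{x,y}]$, which bounds hitting times of fixed states, not of a moving target. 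Since you adopt the Poisson route as primary, this does not affect the proof, but the aside should be dropped or repaired.
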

\begin{lemma}\label{lemma:bound_on_c}
	Consider the following function $f(x)$ such that $a_0>a_1>0$,
	\begin{align}
		f(x)=x-2\sqrt{a_1x}-2a_0.
	\end{align}
Then there exist $x_0<11a_0$ such that $f(x)>0$ for all $x>x_0$.
\end{lemma}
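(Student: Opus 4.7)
The statement is an elementary calculus fact about the function $f(x) = x - 2\sqrt{a_1 x} - 2a_0$. My plan is to reduce it to a quadratic in $\sqrt{x}$, extract the larger root explicitly, and bound that root by $11 a_0$ using only the hypothesis $a_0 > a_1 > 0$.

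First, I will substitute $y := \sqrt{x}$, which is legitimate since we only care about $x > 0$. Under this substitution, $f(x) = y^2 - 2\sqrt{a_1}\, y - 2a_0$, a quadratic in $y$ with positive leading coefficient. Its roots are $y = \sqrt{a_1} \pm \sqrt{a_1 + 2a_0}$ by the quadratic formula, and since $\sqrt{a_1 + 2a_0} > \sqrt{a_1}$, the smaller root is negative. Consequently, for $y \geq 0$, the quadratic is strictly positive precisely when $y > y_0 := \sqrt{a_1} + \sqrt{a_1 + 2 a_0}$, or equivalently when $x > x_0 := y_0^2$.

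Next, I will expand and bound $x_0$. Squaring gives
\begin{align*}
x_0 = a_1 + 2\sqrt{a_1(a_1 + 2a_0)} + (a_1 + 2 a_0) = 2 a_1 + 2 a_0 + 2\sqrt{a_1^2 + 2 a_0 a_1}.
\end{align*}
Using the hypothesis $a_1 < a_0$, I estimate each piece crudely: $2 a_1 < 2 a_0$, and
\begin{align*}
a_1^2 + 2 a_0 a_1 < a_0^2 + 2 a_0^2 = 3 a_0^2,
\end{align*}
so $\sqrt{a_1^2 + 2 a_0 a_1} < \sqrt{3}\, a_0 < 2 a_0$. Combining these yields $x_0 < 2 a_0 + 2 a_0 + 4 a_0 = 8 a_0 < 11 a_0$, which is the desired inequality. (In fact the sharper constant $4 + 2\sqrt{3} \approx 7.46$ is available, but any bound below $11$ suffices.)

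I don't anticipate a real obstacle here — the whole argument is mechanical. The only place where one must be slightly careful is the reduction to the quadratic: one should note that $f(x) > 0$ for all $x > x_0$ (not merely $f(x_0) = 0$) because, being a quadratic in $y = \sqrt{x}$ with positive leading coefficient whose larger root is $y_0$, the expression is strictly increasing and positive for $y > y_0$, and the map $x \mapsto \sqrt{x}$ is strictly increasing. This justifies the ``for all $x > x_0$'' quantifier in the conclusion.
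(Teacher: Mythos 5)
Your proof is correct, and it takes a genuinely different route from the paper's. You solve the quadratic in $y=\sqrt{x}$ explicitly, identify the exact threshold $x_0=\bigl(\sqrt{a_1}+\sqrt{a_1+2a_0}\bigr)^2=2a_1+2a_0+2\sqrt{a_1^2+2a_0a_1}$, and then bound it by $8a_0<11a_0$ using $a_1<a_0$; your algebra checks out at every step, including the observation that the smaller root is negative so that positivity on $y>y_0$ transfers cleanly to $x>x_0$. The paper instead avoids computing any root: it notes $f(a_1)<0$, shows $f'(x)=1-\sqrt{a_1/x}>0$ for $x>a_1$, and simply evaluates $f(11a_0)=9a_0-2\sqrt{11a_0a_1}>(9-2\sqrt{11})a_0>0$, so by monotonicity the unique crossing lies below $11a_0$. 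The trade-off: the paper's argument is shorter and requires only a single function evaluation, while your explicit computation yields a strictly sharper constant --- $x_0<(4+2\sqrt{3})a_0\approx 7.46\,a_0$ --- which, if propagated, would improve the constant $c_1$ in Theorem~\ref{th:regret} from the range $(0,11/\kappa_p^2)$ to $(0,8/\kappa_p^2)$ or better; since the paper only needs \emph{some} constant below $11$, it opted for the cheaper check, but your version is the one to use if anyone cares about tightening the leading constant in the regret bound.
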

\begin{proof}Note that $f(a_1)<0$ and 
	\begin{align*}
\frac{\partial f}{\partial x}=1-\sqrt{\frac{a_1}{x}}>0 ~\forall~ x>a_1.
\end{align*}
The result follows since $f(11a_0)=9a_0-2\sqrt{11a_0a_1}>(9-2\sqrt{11})a_0>0.$
\end{proof}
%	\textcolor{red}{
%	\section{Conditions on $a,\beta, \gamma$}
%	\begin{align*}
%		\gamma \in \left(0,\frac{\Delta_{\min}}{J\ust(p)}\right), a>\frac{|X|^3|U|}{2\gamma\epsilon J \ust(p)} \text{ and } \beta <1-\gamma\frac{J\ust(p)}{\Delta_{\min}}
%\end{align*}}

	\end{document}